
\documentclass[12pt]{article}
\usepackage{epsfig}
\usepackage{amsfonts}
\usepackage{amssymb}
\usepackage{amsmath}
\usepackage{ifthen}
\usepackage{graphicx}
\usepackage{subfig}
\usepackage{amsmath}
\usepackage{algorithmic}
\usepackage{algorithm}
\usepackage{multirow}
\usepackage{perpage}
\usepackage{color}

\textwidth=6.0in  \textheight=8.25in

\leftmargin=-0.3in   \topmargin=-0.20in


\newcommand\vio[1]{{\bf {V-#1}}}

\newcommand*{\qed}{\hfill\ensuremath{\blacksquare}}



\newtheorem{theorem}{Theorem}
\newtheorem{lemma}{Lemma} 
\newtheorem{remark}{Remark}

\newenvironment{proof}{\paragraph{Proof:}}{\hfill$\square$}




\setlength{\footnotesep}{0.8\baselineskip}

%
%

\def\Title#1{\begin{center} {\Large {\bf #1} } \end{center}}

\begin{document}

\Title{An Approximation Approach for Solving the Subpath Planning Problem}

\bigskip\bigskip


\begin{center}  
 Masoud Safilian$^1$,  S. Mehdi Tashakkori Hashemi$^1$, Sepehr Eghbali$^2$,  Aliakbar Safilian$^3$\\
$^1$ Amirkabir University of Technology, Iran\\
\small{ma.safilian@gmail.com, hashemi@aut.ac.ir}\\
$^2$  University of Waterloo, Canada\\
\small{sepehr.eghbali@uwaterloo.ca}\\
$^3$ McMaster University, Canada \\
\small{safiliaa@mcmaster.ca}
\bigskip\bigskip
\end{center}

\newcommand\condition{triangle inequality}
\newcommand\dv[1]{dv_#1}
\newcommand\dvname{degree of violation}

\begin{abstract}  
The subpath planning problem is a branch of the path planning problem, which has widespread applications in automated manufacturing process as well as vehicle and robot navigation. This problem is to find the shortest path or tour subject for travelling a set of given subpaths. The current approaches for dealing with the subpath planning problem are all based on meta-heuristic approaches. It is well-known that meta-heuristic based approaches have several deficiencies. To address them, we propose a novel approximation algorithm in the $O(n^3)$ time complexity class, which guarantees to solve any subpath planning problem instance with the fixed ratio bound of 2. Beside the formal proofs of the claims, our empirical evaluation shows that our approximation method acts much better than a  state-of-the-art method, both in result and execution time.

{\em Note to Practitioners---}In some real world applications such as robot and vehicle navigation in structured and industrial environments as well as some of the manufacturing processes such as electronic printing and polishing, it is required for the agent to travel a set of predefined paths. Automating this process includes three steps: 1) capturing the environment of the actual problem and formulating it as a subpath planning problem; 2) solving subpath planning problem to find the near optimal path or tour; 3) command the robot to follow the output. The most challenging phase is the second one that this paper tries to tackle it. To design an effective automation for the aforementioned applications, it is essential to make use of methods with low computational cost but near optimal outputs in the second phase. According to the fact that the length of the final output has a direct effect on the cost of performing the task, it is desirable to incorporate methods with low complexity that can guarantee a bound for the difference between length of the optimal path and the output. Current approaches for solving subpath planning problem are all meta-heuristic based. These methods  do not provide such a bound. And plus, they are usually very time consuming. They may find promising results for some instances of problems, but there is no guarantee that they always exhibit such a good behaviour. In this paper, in order to avoid the issues of meta-heuristics methods, we present an approximation algorithm, which provides an appropriate bound for the optimality of its solution. To gauge the performance of proposed methods, we conducted a set of experiments the results of which show that our proposed method finds shorter paths in less time in comparison with a state-of-the-art method. 
\end{abstract}

\section{Introduction} 
{P}{ath} planning is a challenging problem in artificial intelligence and robotics \cite{pepy2009reliable} with applications also in other areas such as computer animation and computer games \cite{geraerts_corridor_2007}, therapeutic \cite{chen_coupled_2011} protein folding \cite{song_path_2002}, manufacturing process \cite{sheng2005tool} and computational biology \cite{tapia_motion_2010}. Due to various applications, different types of this problem have been proposed including the {\it subpath planning problem} (SPP). SPP has widespread applications such as navigation of robots  and vehicles  as well as automated manufacturing process \cite{gyorfi_evolutionary_2010, tong-ying_research_2004}. The goal of SPP is to find the shortest tour, which travels all given subpaths. SPP is an NP-hard problem \cite{gareycomputers,karp_reducibility_1972}. As an example, Fig. \ref{fig:1}.a shows a workspace and Fig. \ref{fig:1}.b represents its corresponding optimal result. Also, Fig \ref{fig:1}.a  represents the corresponding graph constructed based on the workspace. 

There have been proposed a few approaches for solving SPP, all of which are meta-heuristic based. Recently, Ying {\it et al.} \cite{tong-ying_research_2004} and Gyorfi {\it et al.} \cite{gyorfi_evolutionary_2010} proposed some algorithms based on {\it Genetic Algorithm} (GA) \cite{goldberg1989genetic} for solving SPP in polishing robots and electronic printing, respectively.

Like other meta-heuristic methods \cite{blum2003metaheuristics}, GA cannot guarantee any bound on its final result. It may produce some promising results on some given instances, while it has a tendency to converge to local optima for some other instances. In addition, GA needs considerable amount of time in order to return a result. This problem becomes more severe as the number of subpaths grows.

The current paper aims at overcoming the problems of meta-heuristic methods in solving SPP by proposing an approximation algorithm \cite{ausiello1999complexity} with a fixed ratio bound and efficient polynomial complexity. Our method includes three stages described as follows.

The first stage is transforming SPP to {\it Travelling Salesman Problem} (TSP) \cite{johnson_traveling_1997} with an $O(n^2)$ time complexity algorithm.  TSP is a well-known combinatorial optimization problem. Since TSP is an NP-hard problem \cite{karp_reducibility_1972}, proposing a precise algorithm for solving TSP does not make sense. Thus, several attempts have been done to propose approximation algorithms for solving this problem.  In the recent decades, various approximation methods have been proposed for solving this problem. 

Once an SPP instance is transformed to a suitable TSP one, it may seem easy to apply the existing fixed-ratio bound approximation algorithms for TSP for solving SPP. However, this is not the case and there are some crucial challenging issues in this way. Christofides in \cite{Christofides_worst-case_1976} argues that there is no polynomial approximation algorithm with a fixed ratio bound for general TSP.  Literally,  we cannot propose any fixed ratio bound approximation algorithm on a general graph, in which the triangle inequality does not hold in all triangles \cite{Christofides_worst-case_1976} (this observation is also due to Sahni and Gonzale \cite{sahni1976}). However, there are some fixed-ratio bound approximation algorithms such as {\it Christofides'} algorithm \cite{Christofides_worst-case_1976}, for solving TSP over constrained graphs, which satisfy triangle inequality. 
Here is the point where we face a crucial problem: the output graph of transforming of SPP to TSP for a given instance definitely violates the \condition\ condition, as shown in Section \ref{transf}.    
Thus, it is not feasible to apply existing fixed-ratio bound TSP approximation algorithms for solving SPP.  We address this shortcoming in a two next stages. 

In the second stage, we propose an algorithm, called {\it Imperfectly Establish the Triangle Inequality} (IETI), which establishes the \condition\ in a main subset of violating triangles \footnote{As it may be clear, a violating triangle is a triangle which violates the \condition\ condition.}. The output of the first stage, i.e., transforming SPP to TSP, is considered as the input graph of the IETI algorithm. The algorithm outputs a new graph by changing the edges' weight of the input  such that the \condition\ condition holds on all triangles except for some special triangles (those that  one and only one of their edges does have the infinity weight). Let $G'$ be the result of transforming an SPP instance graph $G$ and $G''$ be the output graph of the IETI algorithm for $G'$. We formally show that solving SPP on $G$ would be equivalent to solving TSP on $G''$. The IETI algorithm  is in $O(n^2)$ complexity class and should be seen as a fundamental step for introducing and applying a fixed-ratio bound approximation algorithm for solving SPP. This is because, as discussed already, the main requirement of applying such an algorithm for solving SPP is holding triangle inequality in the given graph.

 Nonetheless, some special triangles still violate the triangle inequality in the output graph of the IETI algorithm. To tackle this problem, in the third stage, we propose an approximation algorithm  with the fixed-ratio bound of 2 and $O(n^3)$ complexity, called  {\em Christofides for SPP} (CSPP) . Indeed, the CSPP algorithm is a modified version of the Christofides' algorithm \cite{Christofides_worst-case_1976} to make it able to  work for all outputs of the IETI algorithm. The Christofides' algorithm is a polynomial approximation algorithm with fixed-ratio bound of $1.5$ \footnote{The fixed ratio bound of $1.5$ is the minimum among the existing methods proposed for solving TSP.} for solving TSP instances in which edge weights are metric \footnote{like other fixed-ratio bound algorithms for solving the TSP}. Thus, any input graphs of this algorithm must satisfy the \condition\ condition. Therefore, it is not feasible to apply Christofides to our problem. In other words, the CSPP algorithm aims at solving TSP for given graphs in which the \condition\  holds in all triangles except for those that one of their edges has the infinity weight. 



\begin{figure}[t!]
\centering
\subfloat[]{\includegraphics[width=0.35\textwidth]{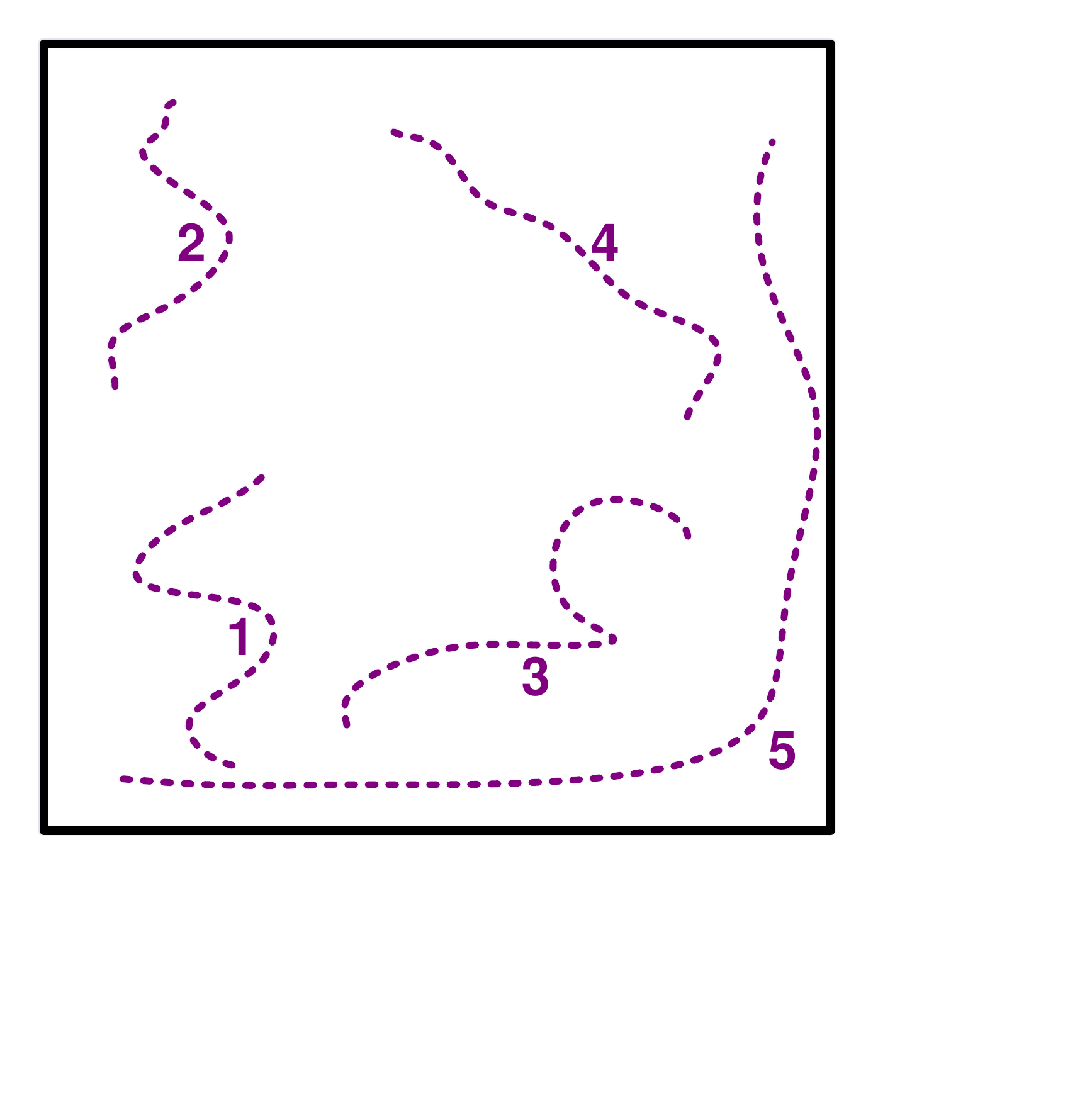}}
\quad
\subfloat[]{\includegraphics[width=0.35\textwidth]{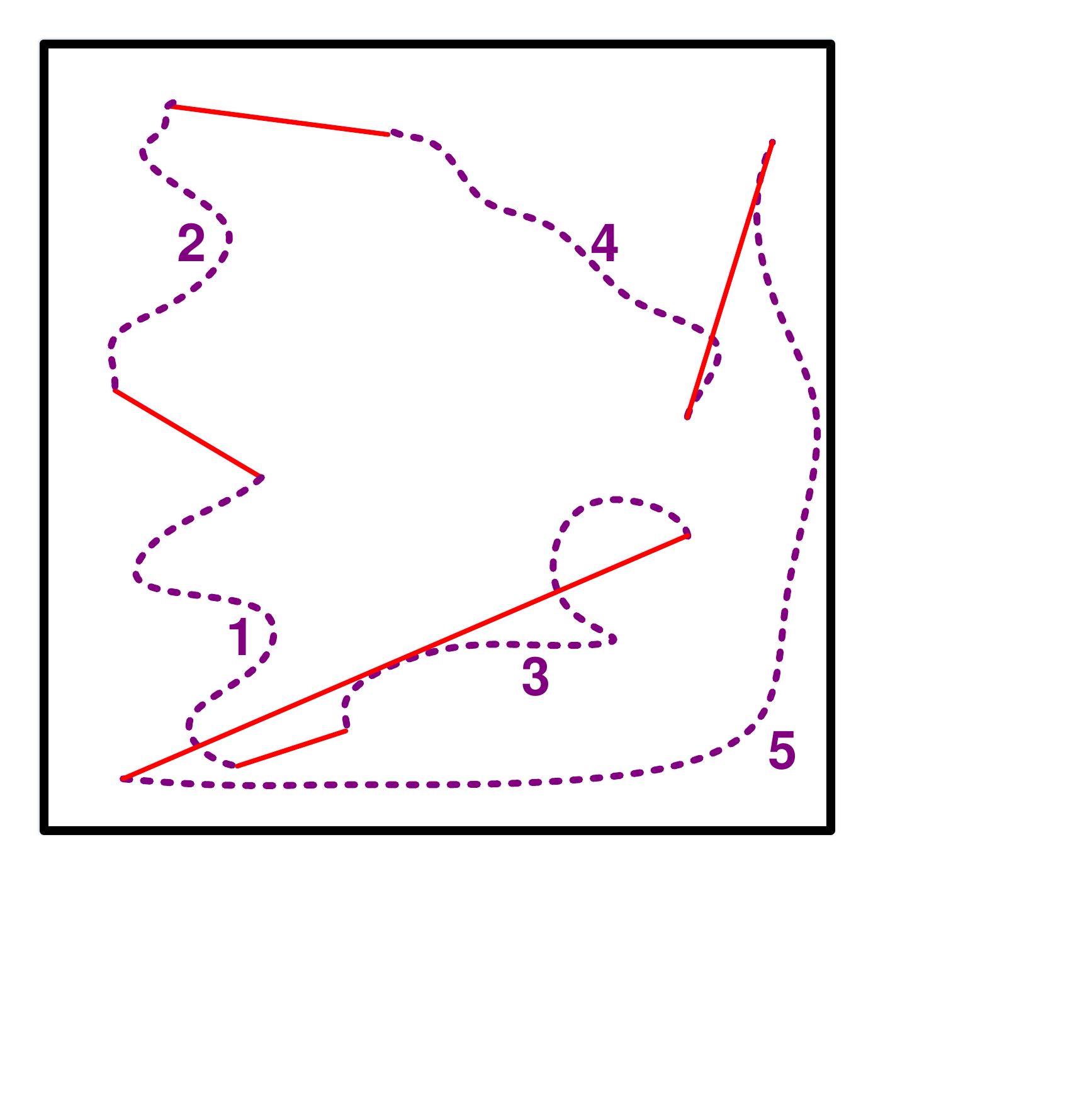}}
\caption{(a) The workspace related to a subpath planning problem where dashed curves represent the subpaths. (b) The optimal solution of the corresponding SPP.}
\label{fig:1}
\end{figure}



In addition to complexity analysis and proving the ratio bound of CSPP, it is empirically compared with the method proposed by Gyorfi {\it et al.} \cite{gyorfi_evolutionary_2010} over various workspaces with different number of subpaths. The results illustrate that CSPP is more efficient than the state-of-the-art method in  terms of both result and running time.

The rest of the paper is organized as follows. Section \ref{sec:related} discusses the related work. Section \ref{transf}, presents transformation of SPP to TSP. We discusses the IETI algorithm in Section \ref{IETI}. In Section \ref{CSPP}, we propose the CSPP algorithm and describe its characteristics and the related theorems. 
The experimental comparison of CSPP and the method proposed by Gyorfi {\it et al.} \cite{gyorfi_evolutionary_2010} is presented in Section \ref{experiments}. Finally, the conclusions and future work are discussed in Section \ref{future}.

\section{Related Work} \label{sec:related}
There exist some graph problems, which are relevant to SPP. This set of problems includes {\em Travelling Salesman Problem with Neighbours} (TSPN) and those that are in the context of {\em Arc Routing Problems} (ARP) \cite{eiselt1995arc1, eiselt1995arc2}. Bellow, we discuss their similarities and differences with SPP. \\

\textbf{Travelling Salesman Problem with Neighbours}: {\it Travelling Salesman Problem with Neighbours} (TSPN) is introduced by Arking and Hassin \cite{arkin1994approximation}. It is a generalization of TSP in which the constraint is to visit the neighbourhood of each node instead of the node itself. In TSPN, each node is represented as a polygon instead of a single point and an optimal solution is the shortest path such that it intersects all polygons. Since TSPN is a generalization of TSP, it is also NP-hard \cite{michael1979computers, papadimitriou1977euclidean}. Besides, Safra and Schwartz \cite{safra2006complexity} showed that it is NP-hard to approximate within any constant bound. For the general case of connected polygons, Mata and Mitchell \cite{mata1995approximation} proposed an $O(\log n)$ approximation bound with $O(N^5)$ time complexity  based on "guillotine rectangular subdivisions",  where $N$ is the total number of vertices of the polygons. If all the polygons have the same diameter, then an $O(1)$ algorithm also exists \cite{dumitrescu2001approximation}. Even if we represent each subpath with a polygon of two vertices, then SPP is different from TSPN. This is because an SPP solution requires to traverse all the subpaths, while a solution for TSPN can only have intersections with each subpaths. \\

\textbf{Rural Postman Problem:} Consider a graph $G(V,E)$, where $V$ is the set of vertices and $E$ is the set of edges. In the {\it Chinese Postman Problem} (CPP), we are interested in finding the shortest closed path such that it travels all the edges. An optimal solution is an Eulerian tour, if exists any. Thus, whenever the degree of each node is even, CPP can be reduced to finding an Eulerian tour. Note that it is well-known that an Eulerian tour always exists in such a graph. If $G$ is either purely directed or purely undirected, CPP has a polynomial time solution. Otherwise (the given graph is neither purely directed nor undirected), the problem would be NP-hard \cite{eiselt1995arc2}.

{\em Rural Postman Problem} (RPP) is a variant of CPP. A CPP problem is called RPP, if a subset of edges must be covered instead of covering all the edges. RPP was first introduced by Orloff \cite{orloff1974fundamental}. The undirected, directed and mixed versions of  RPP are all proven to be NP-hard \cite{frederickson1979approximation, lenstra1981complexity}. Frederickson \cite{frederickson1979approximation} proposed a polynomial time solution for RPP with the worst case ratio bound of $1.5$ for given graphs, which satisfy the \condition\ condition. This solution is known as the best one for RPP. 

 There are many similarities between RPP and SPP. Indeed, RPP is a generalization  of SPP. An SPP instance is an RPP instance in which the subpaths are the edges that must be covered. As Fig. \ref{fig:2} shows, there are additional constraints in SPP. The number of vertices is twice the number of edges that must be covered (subpaths). Thus,  the must-be-covered edges in an SPP instance cannot share a common vertex. Besides, the graph is an undirected complete one.  Although SPP and RPP have many similarities, fixed-ratio bounds algorithms for RPP cannot be applied for SPP. This is because given graphs for SPP do not satisfy the \condition\ condition. \\

\textbf{Stacker Crane Problem:} The {\it Stacker Crane Problem} (SCP) \cite{frederickson1976approximation} is another relevant problem in the context of routing. SCP is defined on a graph consisting of directed and undirected edges. The problem is to find the shortest circuit, which covers all the directed edges (which can be the deliveries that to be made by a vehicle). SCP is also an NP-hard problem,  since it is a generalization of TSP. Coja-Oghlan et al. \cite{coja2006heuristic} proposed an approximation approach for a special case of SCP. In this solution, given graphs must be trees. Even in such a restricted case, the problem is NP-hard. Fredrickson et al. \cite{frederickson1976approximation} proposed a polynomial algorithm for this problem with the ratio bound of $1.8$ in the worst case. The Fredrickson's solution for SCP is known to be the best approximation algorithm \cite{treleaven2013asymptotically}. The difference between SCP and SPP is that, in SPP, subpaths, which must be covered, are indirected edges.

\section{Transformation of SPP to TSP}\label{transf}

In this section, we show how to transform SPP to TSP. Feasible solutions for an SPP instance are the tours that travel all the subpaths. A tour with the minimum length is a desired solution. Each feasible solution is a sequence of connected subpaths.  In a fixed sequence with $n$ subpaths, the $i^{\text{th}}$ ($i < n$) subpath can be connected to the $(i+1)^{\text{th}}$ subpath (the $n^{\text{th}}$ subpath is connected to the first one) in two different ways (either to head or tail). Now, consider an SPP instance with $n$ subpaths. Obviously, the number of possible sequences of these $n$ subpaths is $n!$. Thus, due to two different ways of connections between two consecutive subpaths, the total number of feasible solutions would be $n!2^n$. 

 TSP is one of the classical NP-hard problems of combinatorial optimization. In the recent decades, various approximation \cite{laporte_traveling_1992} and combinatorial optimization methods \cite{johnson_traveling_1997} have been proposed for solving this problem. Thus, transformation of SPP to TSP facilitates applying such methods for solving SPP. The rest of the section is organized as follows. The subsection \ref{sub:transProc} discusses the transformation procedure of SPP to TSP  and in the subsection \ref{sub:transCode}, we discuss the complexity analysis of the procedure on its corresponding pseudo code.   

\subsection{TSP model of SPP: Transformation Procedure} \label{sub:transProc}
Consider an SPP instance with $n$ subpaths indexed with the set $I=\{1, ..., n\}$. The procedure includes two stages. In the first stage, a complete graph $G$ is built, according to the following stages.\\

 {\bf Stage 1:} 
\begin{enumerate}
\item For each subpath, say $i^\text{th}$ ($i\in I$), consider two nodes $s_i$ and $e_i$ corresponding to its starting and end points, respectively. 
\item For each $i\in I$, consider an edge between $s_i$ and $e_i$ with the weight equal to the length of $i^\text{th}$ subpath in the workspace. Let us call this edge the {\it $i^\text{th}$ subpath edge.}
\item For each pair of two distinct subpaths $i$ and $j$ ($i \neq j$), we add edges $s_i e_j$, $e_i s_j$, $s_is_j$ and $e_i e_j$ to the graph. We also consider the weights of these newly added edges equal to the corresponding Euclidean distances in the workspace. Let us call these edges the {\em connecting edges}.
\end{enumerate}
\begin{figure}[t!]
\centering
\subfloat[] {\includegraphics[width=0.4\textwidth]{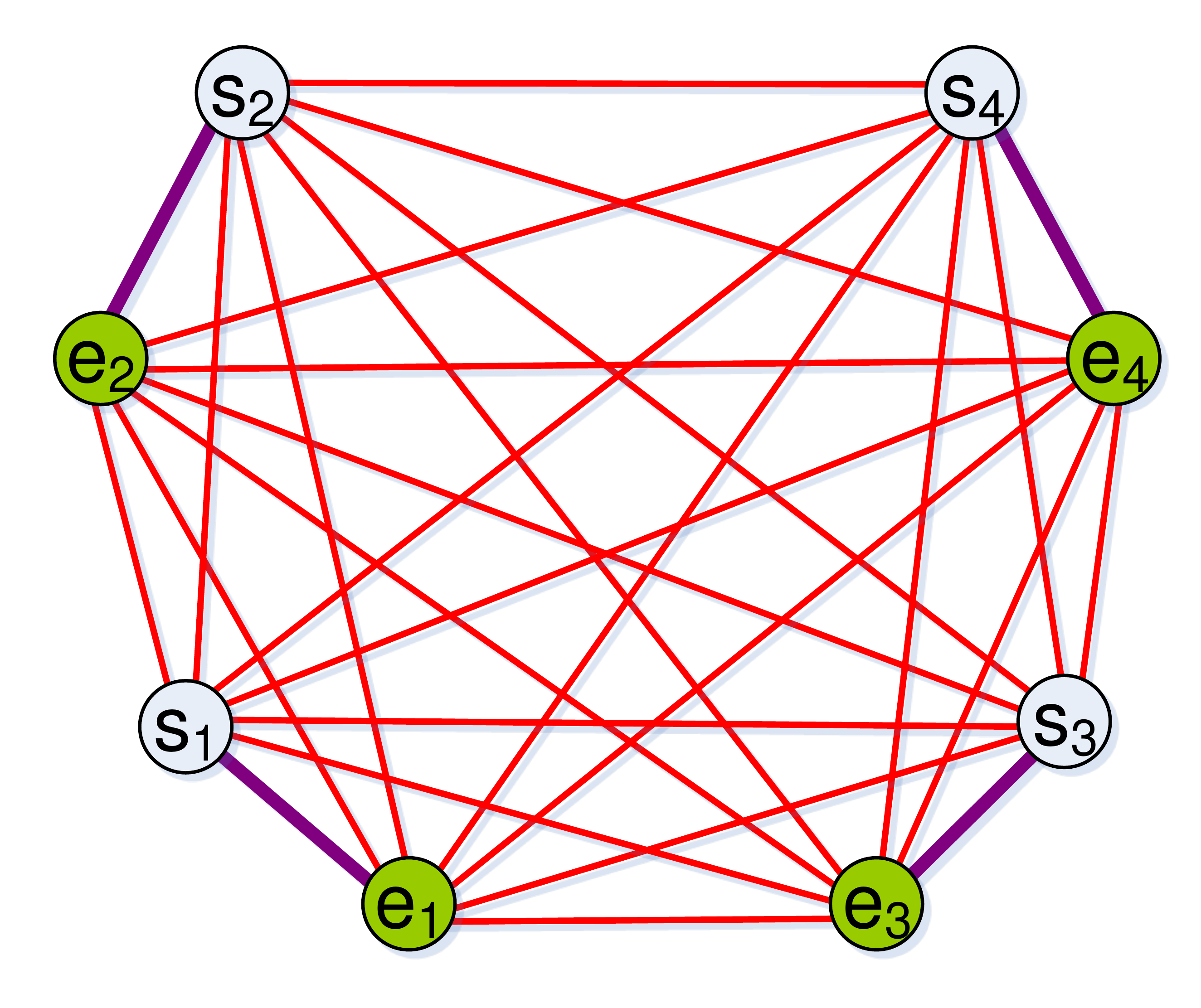}}
\subfloat[] {\includegraphics[width=0.4\textwidth]{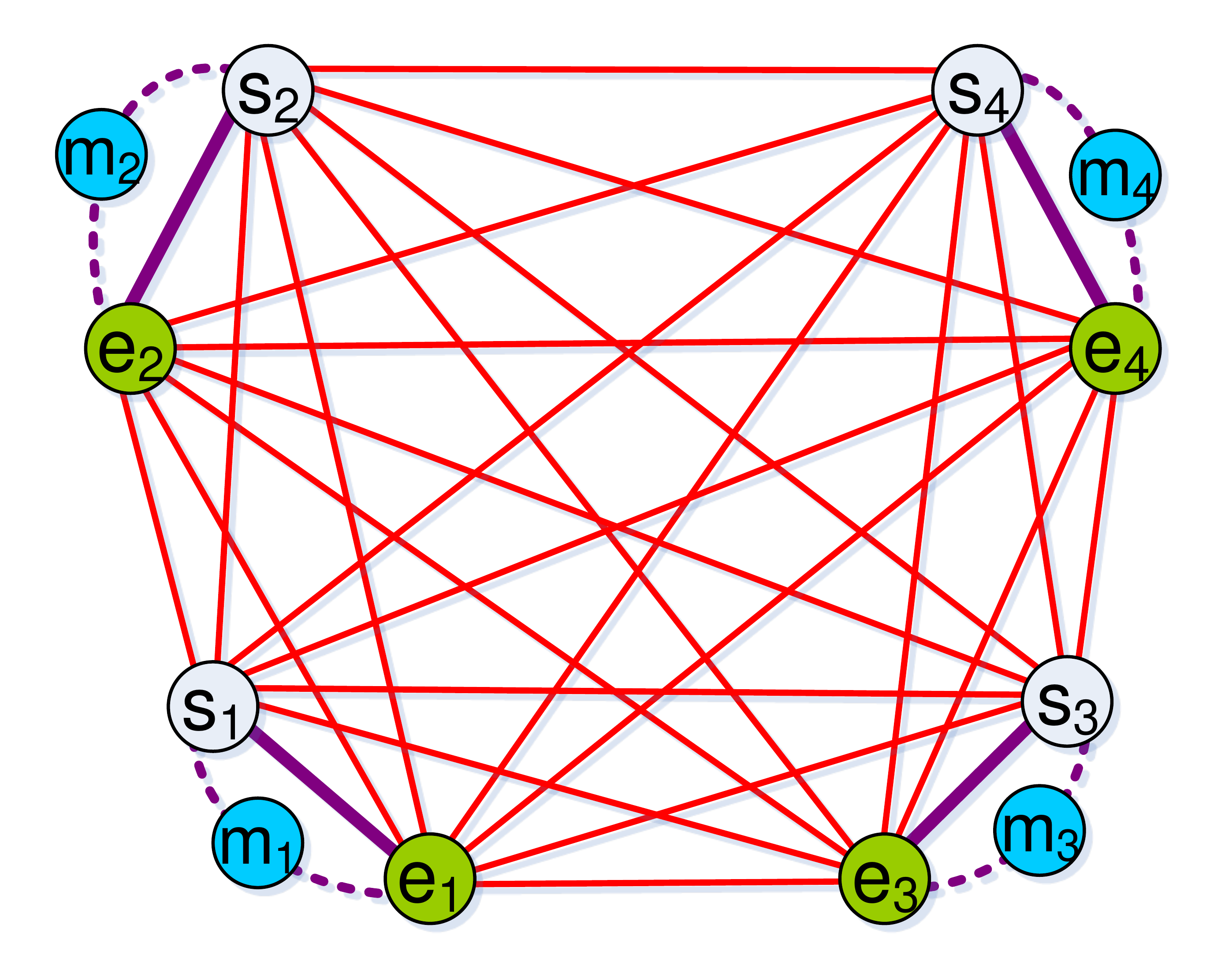}}
    \caption{(a) graph $G$ represents the workspace. In this graph, thick edges indicate subpaths and thin edges indicate the connections between the subpaths. The weight of connecting edges are equal to their corresponding Euclidean distances between subpaths in the workspace. (b) The graph $G'$ is  built by adding middle nodes to $G$. Infinity weight edges are not shown here.}
    \label{fig:2}
 \end{figure}
Fig. \ref{fig:2}.a  depicts the graph $G$ generated according to Stage 1 for the given workspace in Fig. \ref{fig:1}.a.

The TSP tour of $G$ (the graph generated in the above procedure) is not necessarily equivalent to the solution of the given SPP instance. This is because the solution of SPP is a tour traveling all the subpaths, while the TSP tour of $G$ may not cover all the subpath edges (such as  $s_i e_i$). To make sure that the TSP tour travels all the subpaths, a complete graph $G'$ is generated based on $G$ as follows: \\

{\bf Stage 2:}
\begin{enumerate}
\item For each subpath, say $i^\text{th}$ subpath, a node $m_i$ is added to the graph, called the {\em middle node} of $i^\text{th}$ subpath.
\item For each subpath,  say $i^\text{th}$, two edges $s_im_i$ and $m_ie_i$ are added to the graph, the weights of which are equal to the half of the subpath length. These edges are called {\em $i^\text{th}$ double subpath edges}.
\item For each middle node $m_i$, the edge $m_iv$, where $v \notin \{s_i,e_i\}$, is added to the graph with the infinity  weight.
\end{enumerate}
Fig. \ref{fig:2}.b depicts the graph $G'$ generated according to Stage 2 for the given graph $G$ in Fig. \ref{fig:2}.a.
\begin{theorem} \label{th:1}
The result of solving SPP on a given instance is equivalent to finding the TSP tour in $G'$ generated according to the above procedures (Stage 1 + Stage 2) on the instance.   \qed
\end{theorem}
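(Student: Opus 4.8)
The plan is to set up a length-preserving correspondence between the feasible solutions of the given SPP instance and the finite-weight Hamiltonian cycles of $G'$, and then to observe that an optimal TSP tour of $G'$ is forced to have finite weight, so that the two optimization problems attain the same optimum on matching objects. Write $\ell_i$ for the length of the $i^\text{th}$ subpath in the workspace; recall that in $G$ the subpath edge $s_ie_i$ has weight $\ell_i$, each connecting edge has weight equal to the corresponding Euclidean distance, and in $G'$ each of $s_im_i$ and $m_ie_i$ has weight $\tfrac12\ell_i$ while every other edge at $m_i$ has weight $\infty$.

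The first and main step is to pin down the local structure of any finite-weight tour at a middle node. Since a Hamiltonian cycle uses exactly two edges incident to $m_i$, and by Stage 2 (item 3) all edges at $m_i$ except $s_im_i$ and $m_ie_i$ have infinite weight, any tour of finite total weight must use precisely the pair $s_im_i,\,m_ie_i$. Hence in every finite tour the middle node $m_i$ occurs only inside the length-two subpath $s_i-m_i-e_i$, contributing weight $\tfrac12\ell_i+\tfrac12\ell_i=\ell_i$, which equals both the weight of $s_ie_i$ in $G$ and the length of the $i^\text{th}$ subpath. (As a by-product, for $n\ge 2$ no finite tour can use the edge $s_ie_i$ directly, since combining it with the forced edge $s_im_i$ would force $e_i$ to repeat.)

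Next I would "smooth out'' the middle nodes: in a finite tour of $G'$, replacing each subpath $s_i-m_i-e_i$ by the single edge $s_ie_i$ yields a Hamiltonian cycle $C$ on the $2n$ vertices $\{s_i,e_i:i\in I\}$ of $G$ that contains all $n$ subpath edges and has the same total weight; conversely, subdividing each subpath edge of such a cycle by its middle node recovers a finite tour of $G'$ of equal weight. It then remains to identify such cycles $C$ with feasible SPP solutions: in $C$ every vertex has degree $2$ and exactly one of its two edges is a subpath edge, so the other is necessarily a connecting edge to a vertex of a different subpath; traversing $C$ therefore alternates subpath edges and connecting edges and spells out a cyclic sequence of the $n$ subpaths, each joined at its head or tail to the next — precisely a feasible SPP solution — whose length is $\sum_i\ell_i$ plus the sum of the chosen Euclidean connection distances, i.e. the length of that SPP tour. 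This identification is visibly invertible, so composing the two maps gives the claimed length-preserving correspondence.

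To conclude: the SPP instance has at least one feasible solution, so by the correspondence $G'$ has at least one finite-weight Hamiltonian cycle; therefore every optimal TSP tour of $G'$ has finite weight and corresponds to a feasible SPP solution of equal length, and minimality transfers across in both directions, proving equivalence. The only genuinely delicate point I expect is the bookkeeping in the smoothing step — verifying that contracting the degree-two vertices $m_i$ really produces a simple Hamiltonian cycle on the remaining $2n$ vertices and that distinct subpaths are never identified — but this is immediate because $s_i,e_i,m_i$ are pairwise distinct and $m_i$ has degree exactly two in any finite tour; the small case $n=1$ can be checked by hand and causes no trouble.
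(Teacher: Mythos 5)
Your proposal is correct and follows essentially the same route as the paper's own proof: the existence of a finite Hamiltonian tour in $G'$, the observation that the infinite-weight edges at each $m_i$ force any finite tour to use exactly the pair $s_im_i,\,m_ie_i$ (whose combined weight equals the subpath length), and the resulting identification of the optimal TSP tour of $G'$ with a minimum-length SPP tour. You merely make the length-preserving correspondence explicit in both directions (smoothing/subdividing at the middle nodes), which the paper leaves implicit, so no substantive difference in approach.
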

\begin{proof}
According to Fig. \ref{fig:2}.b, there is a finite Hamiltonian tour in $G'$. [s$_1$-m$_1$-e$_1$-s$_2$-m$_2$-e$_2$-s$_3$ \ldots s$_n$,m$_n$-e$_n$] is a sample of finite hamiltonian tours in $G'$ with $n$ subpaths.

The TSP tour over $G'$ is a Hamiltonian tour with the minimum weight. Therefore, The TSP tour over $G'$ is finite.
The TSP tour of $G'$ must visits all  the middle nodes, since, for each $i$, it contains two edges crossing the node $m_i$. There are only two finite edges $s_i$-$m_i$ and $m_i$-$e_i$ connecting to $m_i$. Hence, the TSP tour over $G'$ must contain $i^\text{th}$ double subpath edges for each $i$. Since  $s_im_i$ and $m_ie_i$ together are equivalent to $i^\text{th}$ subpath in the workspace, the TSP tour of $G'$ is a minimum tour, which travels all subpaths. Hence, solving SPP is equivalent to finding the TSP tour in $G'$.
\end{proof} 

Throughout the rest of the paper, we use the notation $G'$ to denote the graph generated in the above procedure for a given SPP instance. 
\subsection{Pseudo Code and Complexity Analysis of Transforming} \label{sub:transCode}
Algorithm 1 presents a pseudo code for the SPP to TSP transformation procedure. The algorithm takes a workspace as input and returns a graph $G'$ as output. It includes the following two phases: 

1) Generating a graph $G$, according to Stage 1 (Line 1)  

2) Generating a graph $G'$ by adding middle nodes to $G$, according to Stage 2 (Lines 2 to 9)

\begin{algorithm}
\begin{algorithmic}[1]
\caption{: SPP to TSP}
\STATE Construct $G$ with the adjacency matrix $w$
\FOR{$i=1$ to {\it n}}
\STATE add middle node $m_i$
\STATE $w(m_i,s_i) \leftarrow \frac{w(e_i,s_i)}{2} $
\STATE $w(m_i,e_i) \leftarrow \frac{w(e_i,s_i)}{2}$
\FOR{each node $d \in G$ where $d \notin \{s_i,e_i\}$}
\STATE $w(m_i,d) \leftarrow \infty$
\ENDFOR
\ENDFOR
\end{algorithmic}
 \label{alg:trans}
\end{algorithm}

Time complexity of generating  the graph $G$ is in $O(n^2)$, where $n$ is the number of subpaths. Lines 2 to 9 add middle nodes to the graph within a loop of $n$ iterations. In each iteration, there is another loop (lines 6-8), which requires $O(n)$ running time. Therefore, adding middle nodes requires $O(n^2 )$ running time.  Thus, the total complexity of the algorithm is $O(n^2)$.

\section{Imperfectly Establish the Triangle Inequality} \label{IETI}
As discussed already, using any existing approximation method for TSP requires \condition\  to be hold over  given graphs. Two kinds of triangles in $G'$ (Fig. \ref{fig:2}.b) may violate \condition: 

\vio{1}) Triangles with a subpath edge as one of their edges, i.e, triangles in the form of $\bigtriangleup s_ie_iv$ \footnote{$\bigtriangleup abc$ denotes the triangle with $a,b$, and $c$ as its vertices.}, where $v \neq m_i$. 

\vio{2}) Triangles with one and only one infinity edge. In such a triangle, one of its edges is either $s_i m_j$ or $e_i m_j$ ($i \neq j$). 

Other triangles in $G'$ that are not in one of the above kinds do not violate the \condition\ condition. Such triangles can be grouped into the following kinds: 

1) Triangles that have more than one infinity edge. 

2) Triangles in which all of the edges' weights are equal to their corresponding Euclidean distances.    

3) Triangles that are of the form $\bigtriangleup s_im_ie_i$. 

As mentioned in the introduction, we tackle the \condition\ violation in $G'$ in two stages. The first stage is proposing an algorithm, called  Imperfectly Establish the Triangle Inequality (IETI). We discuss how the algorithm works in the subsection \ref{sub:IETIproc}. The subsection \ref{sub:IETIcode}  discusses the complexity analysis of the procedure on its corresponding pseudo code.

\subsection{IETI: The Procedure} \label{sub:IETIproc}

 The IETI algorithm is given the output of the transformation procedure ($G'$) and deals with the first category of violating triangles, i.e., \vio{1}. Indeed, this algorithm makes some modifications to the edges' weight of $G'$ to make a graph, denoted by $G''$, such that TSP tours in $G'$ and $G''$ are the same (see Theorem \ref{lemma:1}) and there is no violating triangles of kind \vio{1} in $G''$  (see Theorem \ref{th:2}). 

The IETI algorithm is an iterative method. Indeed, it iterates over all subpaths, for each of which it updates the weight of edges in a same way. Indeed, each iteration corresponds to a subpath. Below, we describe how it works.  

For each iteration, say $i^\text{th}$ (corresponding to the $i^\text{th}$ subpath), we define a variable called {\em $i^\text{th}$ \dvname}, denoted by $\dv{i}$. We apply this variable to formally recognize what triangles in \vio{1} violate the \condition\ condition. We also use it to resolve such violations.   The equation (\ref{eq:trigrule}) shows how to compute $\dv{i}$.  

\begin{remark}
For a given graph $H$, the notations $V(H)$ and $w(a, b)$ denote the set of vertices and the weight of the edge $ab$, respectively. 
\end{remark}

\begin{equation}
\label{eq:trigrule}
\begin{split}
& \forall \text{ } d \in V(G')-\{s_i, e_i\} \\ 
& \dv{i} = 0.5(\min_{\forall d}(w(s_i,d)+w(e_i,d))-w(s_i,e_i))
\end{split}
\end{equation}

$\dv{i} < 0$ implies that at least one of the triangles in which one of their edges is $e_is_i$  violates the \condition\ condition. Otherwise, i.e., $\dv{i} \geq 0$, none of such triangles violates the condition. In the former case, the weight of edges are updated by the following equations:
\begin{equation}
\label{eq:2}
\begin{split}
& w(s_i,e_i) \leftarrow w(s_i,e_i) - |\dv{i}| \\
& w(s_i,m_i) \leftarrow w(s_i,m_i) -  |\frac{\dv{i}}{2}|\\
& w(e_i,m_i)   \leftarrow w(e_i,m_i) - |\frac{\dv{i}}{2}|
\end{split}
\end{equation}

\begin{equation}\label{eq:3}
\begin{split}
& \forall q \in V(G') -\{ s_i, e_i, m_i\} \\
& w(q,e_i) \leftarrow w(q,e_i)+ \frac{|\dv{i}|}{2} \\
& w(q,s_i) \leftarrow w(q,s_i)+ \frac{|\dv{i}|}{2}
\end{split}
\end{equation}

Note that, in equation \ref{eq:3}, the added weights to the edges, which are connected to the subpaths, are equal and symmetric, i.e., the weights of edges connected to $s_i$ and $e_i$ are increased equally. This property makes  the TSP tour over $G'$ to be equivalent to one over $G''$ (proven in Theorem \ref{lemma:1}). 

These changes make the triangles containing the $i^\text{th}$ subpath to satisfy the \condition. Note that it does not make other triangles, which already satisfy the \condition, to violate the condition. This claim is proven in Theorem \ref{th:2}. 

Theorem \ref{th:2} shows that IETI establishes the \condition\ in all \vio{1} triangles in $G''$ such that other  triangles except for those in \vio{2} still satisfy the \condition. Throughout the rest of the paper, we use the notation $G''$ to denote the graph generated in the above procedure (IETI) for $G'$.

\begin{theorem}
\label{th:2} 
After the execution of IETI, all the triangles in $G''$ satisfy the \condition\ except for those in \vio{2}. \qed
\end{theorem}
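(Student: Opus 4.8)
The plan is to argue by induction on the iterations of IETI, showing that the invariant ``every triangle not in \vio{2} satisfies the \condition'' is preserved by each iteration, and that after the $i^\text{th}$ iteration no \vio{1} triangle built on the $i^\text{th}$ subpath edge remains violating. Before the first iteration, by the classification given just before the theorem, the only triangles of $G'$ that can violate the \condition\ are those in \vio{1} and \vio{2}; so the invariant together with ``all \vio{1} triangles involving subpaths $1,\dots,i$ are fixed'' gives the theorem once $i=n$. The work is therefore entirely in the inductive step: fix an iteration $i$ and analyze how the updates \eqref{eq:2}--\eqref{eq:3} (applied only when $\dv{i}<0$) affect each triangle.

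First I would record the exact effect of one iteration on edge weights: the subpath edge $s_ie_i$ decreases by $|\dv{i}|$, each double subpath edge $s_im_i$, $m_ie_i$ decreases by $|\dv{i}|/2$, every edge from a vertex $q\notin\{s_i,e_i,m_i\}$ to $s_i$ or to $e_i$ increases by $|\dv{i}|/2$, and all other edges (in particular edges not touching $\{s_i,e_i,m_i\}$, and the infinity edges $m_iq$) are unchanged. Then I would partition the triangles of $G''$ according to how many of their vertices lie in $\{s_i,e_i,m_i\}$ and case-check the triangle inequality for each side of each such triangle. The cases: (a) triangles disjoint from $\{s_i,e_i,m_i\}$ — untouched, still fine; (b) triangles $\triangle s_i e_i q$ with $q\neq m_i$ — exactly the \vio{1} triangles being targeted; here two sides grow by $|\dv{i}|/2$ and one side shrinks by $|\dv{i}|$, and by the definition of $\dv{i}$ in \eqref{eq:trigrule} the choice of $|\dv{i}|$ is precisely calibrated so that $w(s_i,q)+w(e_i,q)\ge w(s_i,e_i)$ becomes tight for the minimizing $q$ and strict for the others, while the other two inequalities only get easier because one side grew and another shrank by a comparable amount; (c) triangles $\triangle s_i m_i e_i$ — all three sides shrink proportionally ($|\dv{i}|/2,|\dv{i}|/2,|\dv{i}|$), and since $w(s_i,m_i)=w(m_i,e_i)=\tfrac12 w(s_i,e_i)$ is maintained, this triangle stays degenerate-but-valid; (d) triangles with exactly one vertex in $\{s_i,e_i\}$ and none being $m_i$, say $\triangle s_i q r$ — one side $s_iq$ grows by $|\dv{i}|/2$, one side $s_ir$ grows by $|\dv{i}|/2$, and $qr$ is unchanged, so the two inequalities bounding $qr$ get easier and the inequality bounding $s_iq$ (resp. $s_ir$) reads $w(s_i,r)+w(q,r)\ge w(s_i,q)$, i.e. after the update $w(s_i,r)+|\dv{i}|/2+w(q,r)\ge w(s_i,q)+|\dv{i}|/2$ — the added halves cancel, so it is preserved; (e) triangles with two vertices $s_i,e_i$ replaced by... already covered; (f) triangles containing $m_i$ and exactly one of $s_i,e_i$, say $\triangle s_i m_i q$ with $q\notin\{s_i,e_i\}$ — this is a \vio{2} triangle (it has the single infinity edge $m_iq$), which the theorem explicitly exempts, so nothing to check; (g) triangles $\triangle m_i q r$ with $q,r\notin\{s_i,e_i\}$ — both $m_iq$ and $m_ir$ are infinity, so this is in the ``more than one infinity edge'' class and trivially satisfies the \condition\ (the finite side $qr$ is $\le\infty$, and each infinite side is $\le\infty+$anything).

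The main obstacle is case (b): one must verify that decreasing $w(s_i,e_i)$ by exactly $|\dv{i}|$ does not \emph{overshoot}, i.e. does not turn some \emph{other} triangle on the edge $s_ie_i$ into a violator in the reverse direction, and simultaneously that the value $|\dv{i}|$ is large enough to fix the worst (minimizing) $q$. This is exactly why $\dv{i}$ is defined through a $\min$ over $d$ and carries the factor $0.5$: after the update, for the minimizing $d^\star$ we get $w(s_i,d^\star)+w(e_i,d^\star)+|\dv{i}| = w(s_i,e_i)+|\dv{i}| - |\dv{i}| + |\dv{i}|$... — the bookkeeping must show the new left side equals the new right side for $d^\star$ and exceeds it for all other $d$, and that the two ``other'' sides of each such triangle, $w(s_i,d)$ and $w(e_i,d)$, which each grew by $|\dv{i}|/2$, are still bounded by the sum of the other two (one of which shrank); here one uses that before the update $w(s_i,d)\le w(s_i,e_i)+w(e_i,d)$ held vacuously or by a prior argument, and tracks the net change $-|\dv{i}|+|\dv{i}|/2 = -|\dv{i}|/2 \le +|\dv{i}|/2$. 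I would also handle a subtle point: once iteration $i$ is done, could a \emph{later} iteration $j>i$ re-break a \vio{1} triangle on subpath $i$? Such a triangle $\triangle s_ie_iq$ has its edge $s_ie_i$ unchanged by iteration $j$ (iteration $j$ only touches edges incident to $s_j,e_j,m_j$), while $w(s_i,q)$ and $w(e_i,q)$ either both increase by $|\dv{j}|/2$ (if $q\notin\{s_j,e_j,m_j\}$) or are handled by the symmetric cases above — in every sub-case the left side of the inequality $w(s_i,q)+w(e_i,q)\ge w(s_i,e_i)$ does not decrease, so the fix persists. Wrapping these sub-cases into a clean invariant statement and checking all three inequalities per triangle is the bulk of the routine-but-lengthy verification; the conceptual content is entirely in the calibration of $\dv{i}$ and the symmetry of \eqref{eq:3}.
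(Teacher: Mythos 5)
Your overall strategy is the same as the paper's: induct over the IETI iterations, classify triangles by how they meet $\{s_i,e_i,m_i\}$, and note that later iterations cannot re-break a repaired subpath triangle because they only raise the relevant left-hand sides. The gap is in the central case, the triangles $\triangle s_i e_i q$ themselves. You claim that, besides $w(s_i,q)+w(e_i,q)\ge w(s_i,e_i)$, ``the other two inequalities only get easier''; in fact they get strictly harder. In $w(s_i,e_i)+w(e_i,q)\ge w(s_i,q)$ the left side suffers a net change of $-|\dv{i}|+|\dv{i}|/2=-|\dv{i}|/2$ while the right side gains $+|\dv{i}|/2$, a total loss of slack of $|\dv{i}|$. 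Your later bookkeeping for this (``the inequality held before the update'' together with $-|\dv{i}|/2\le +|\dv{i}|/2$) establishes nothing, because the pre-update slack could a priori be smaller than $|\dv{i}|$; you correctly flag this ``overshoot'' as the main obstacle, but the proposal never actually resolves it, and it cannot be resolved by increment-tracking alone.

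The missing ingredient is the geometric input the paper invokes exactly here: every connecting-edge weight is the Euclidean distance in the workspace, possibly augmented by equal symmetric increments $\alpha/2$ from earlier iterations, and the subpath-edge weight is the subpath length, hence at least $ed(s_i,e_i)$. Consequently $\min_{d}[w(s_i,d)+w(e_i,d)]\ge ed(s_i,d)+ed(e_i,d)\ge ed(s_i,e_i)$ as well, so the updated weight
$w(s_i,e_i)-|\dv{i}|=\tfrac12\bigl(w(s_i,e_i)+\min_{d}[w(s_i,d)+w(e_i,d)]\bigr)$
is still at least $ed(s_i,e_i)$. Combining this with the Euclidean triangle inequality $ed(s_i,e_i)+ed(e_i,q)\ge ed(s_i,q)$, and with the fact that $w(s_i,q)$ and $w(e_i,q)$ carry the \emph{same} additive increment when $q$'s own subpath was processed earlier (the symmetry built into equation (\ref{eq:3})), is what closes $w(s_i,e_i)+w(e_i,q)\ge w(s_i,q)$ after the update. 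Without an argument of this kind --- i.e., without descending to the workspace geometry rather than only tracking how weights move --- the decisive case of the induction remains unproven, and that case is precisely what the theorem asserts.
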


\begin{proof}
Let $n$ be the number of subpaths of the original workspace. 
Consider the $i^\text{th}$ step of  IETI. If $\dv{i} < 0$ (computed in the equation \ref{eq:trigrule}), then, according to the equations (\ref{eq:2}) and (\ref{eq:3}), the weights of the corresponding edges change.  Let $G''_i$ denote the result graph after the execution of the $i^\text{th}$ step of IETI on $G'$. It is also natural to consider $G''_0$ and $G''_n$ equal to $G'$ and $G''$, respectively.  

Now, we are going to prove the following statement: \\

{\bf Statement:} {\em A violating triangles  in $G''_i$ is either in:

g-1) \vio{2} or 

g-2) \vio{1} such that it is a triangle $\bigtriangleup s_je_jv$ with $j>i$.}\\


We use an inductive reasoning to prove the above statement, as follows.

({\em base case}):  It follows obviously that the above statement holds in $G''_0$ (which is equal to $G'$). 

({\em hypothesis}): Assume that, for some $t$ with $1 \leq t < i$,  the statement holds for $G''_0, ..., G''_{t-1}$, now it suffices to show that the statement also holds for $G''_t$.  This is shown in the inductive step. 

({\em inductive step}): There exists the two following possible cases for $G''_{t-1}$. We show in each case the $G''_t$ satisfies the statement. 

\begin{enumerate}
\item ``In $G''_{t-1}$, the triangles with the edge $s_t e_t$ do not violate the \condition\ condition.'' 

Thus, for any $j\leq t$, the triangles in $G''_{t-1}$ with an edge $s_je_j$ do not violate the \condition. In this case, during the $t^\text{th}$ step, no modification will be made to the graph $G''_{t-1}$ and $G''_t$ would be equal to $G''_{t-1}$. Thus, for any $j \leq t$, the triangles in $G''_t$ with an edge $s_je_j$ do not violate the  \condition. 
Hence the statement holds for $G''_t$.

\item ``There are some triangles in $G''_{t-1}$ with an edge $s_t e_t$, which violates the \condition.''

Consider a violating triangle in $G''_{t-1}$ with an edge $s_te_t$. Let us see what would happen in the $t^\text{th}$ step. According to the equation (\ref{eq:2}), the weight of the edge $s_t e_t$ must decrease by $|\dv{t}|$. Moreover, for any $q$, the weights of the edges $s_tq$ and $qe_t$ increase by $\frac{|\dv{t}|}{2}$ (equation \ref{eq:3}). 

Each triangle in $G''_t$, say $\bigtriangleup abc$, can fall into one of the seven following categories. The validity of triangle inequality in all categories will be investigated. In other words, the validity of the inequalities $w_t (a,c)+ w_t (b,c) \geq w_t (a,b)$ and $w_t(a,b)+ w_t (b,c) \geq w_t(a,c)$, where $w_t$ is the adjacency matrix of $G''_t$ and $w_t(a,b)$ denotes the weight of the edge $ab$ in $G''_t$ \footnote{Validity investigation of $w_t(a,b)+w_t(a,c)\geq w_t(b,c)$ is similar to of $w_t(a,b)+w_t(b,c) \geq w_t(a,c)$. So, there is no need to investigate the validity of the former inequality in the proof.}.

\begin{enumerate}
\item ``The triangle edge $ab$ is equal to $s_t e_t$.''

In this case, the weight of the edge $ab$ decreases by $|\dv{t}|$ and the weights of the edges $ac$ and $bc$ increases by $\frac{|\dv{t}|}{2}$. The equations (\ref{eq:5}) and (\ref{eq:9}) show that the triangles in this category satisfy the \condition\ after modifications:
\begin{equation}
\begin{split}
\label{eq:4}
& w_t(a,c) + w_t(b,c) \\
& = w_t(s_t,c)+w_t(e_t,c) \\
& = w_{t-1} (s_t,c) + \frac{|\dv{t}|}{2} + w_{t-1}(e_t,c)+\frac{|\dv{t}|}{2}\\
& = w_{t-1}(s_t,c) +w_{t-1}(e_t,c)+2|\dv{t}| - |\dv{t}| \\
\end{split}
\end{equation}
By replacing $2|\dv{t}|$ with 
 $w_{t-1}(s_t,e_t)- \min_{\forall d}[w_{t-1}(s_t,d)+w_{t-1}(e_t,d)]$, we would have:
\begin{equation}
\label{eq:5}
\begin{split}
& w_t(a,c)+w_t(b,c) \\
& = w_{t-1}(s_t,c) + w_{t-1}(e_t,c)+w_{t-1}(s_t,e_t) \\
& - \min_{\forall d}[w_{t-1}(s_t,d) +w_{t-1}(e_t,d)]-|\dv{t}|\\
& \geq w_{t-1}(s_t,e_t)-|\dv{t}| = w_t(s_t,e_t) = w_t(a,b)
\end{split}
\end{equation}
\begin{equation}
\begin{split}
& w_t(a,b)+ w_t(b,c) \\
& = w_t(s_t,e_t) +w_t(e_t,c) \\
& = w_{t-1}(s_t,e_t)-|\dv{t}|+w_{t-1}(e_t,c)+\frac{|\dv{t}|}{2}
\end{split}
\end{equation}
By replacing $|\dv{t}|$ with 
 $\frac{1}{2}(w_{t-1}(s_t,e_t)-\min_{\forall d}[w_{t-1}(s_t,d)+w_{t-1}(e_t,d))]$, 
 we would have:
\begin{equation}
\begin{split}
& w_t(a,b)+w_t(b,c) =w_{t-1}(s_t,e_t)+w_{t-1}(e_t,c) \\
& - \frac{w_{t-1}(s_t,e_t) - \min_{\forall d}[s_{t-1}(s_t,d)+s_{t-1}(e_t,d)]}{2} \\
& + \frac{|\dv{t}|}{2} \\
& = \frac{w_{t-1}(s_t,e_t)}{2} \\
& + \frac{\min_{\forall d}[w_{t-1}(s_t,d)+w_{t-1}(e_t,d)]}{2} \\
& + w_{t-1}(e_t,c) + \frac{|\dv{t}|}{2}
\end{split}
\end{equation}
The weight of the edge $s_t e_t$ has not changed before the $t^\text{th}$ step. Thus, $w_{t-1}(s_t,e_t)= w_0(s_t,e_t)$. In other words, it is equal to the length of $t^\text{th}$ subpath. Hence, $w_{t-1}(s_t,e_t)>ed(s_t,e_t)$, where $ed(s_t,e_t)$ is the Euclidean distance between $s_t$ and $e_t$ in the workspace.

Also, any node such as $d$ satisfies the inequality $w_{t-1}(s_t,d)+ w_{t-1}(e_t,d) \geq ed(s_t,d)+ed(e_t,d)>ed(s_t,e_t)$.\footnote{The weights of edges $s_td$ and $e_td$ either have not changed before $t^\text{th}$ step or are modified by equation (\ref{eq:3}). Therefore, $w_{t-1}(s_t,d) \geq w_0(s_t,d) \geq ed(s_t,d)$ and $w_{t-1}(e_t,d) \geq w_0(s_t,d) \geq ed(e_t,d)$.} As a consequence, the inequality $ \min_{\forall d}[w_{t-1}(s_t,d)+ w_{t-1}(e_t,d)]>ed(s_t,e_t)$ holds, which results in:
\begin{equation}
\begin{split}
& w_t(a,b)+ w_t(b,c)  \\ & \geq ed(s_t,e_t )+w_{t-1} (e_t,c)+\frac{|\dv{t}|}{2} 
\end{split}
\end{equation}
If the weights of subpaths, which pass through the node $c$ change before the $t^\text{th}$ step, then the values of $w_{t-1}(s_t,c)$ and $w_{t-1}(e_t,c)$ would be equal to $ ed(s_t,c)+\frac{\alpha}{2}$ and  $ed(c,e_t)+\frac{\alpha}{2}$, respectively, where $\alpha$ is equal to the parameter $dv$ (\dvname) of the subpath, which passes through the node $c$. \\ 
Otherwise (if the value of such subpaths are not updated), the values of $w_{t-1}(s_t,c)$ and $w_{t-1}(e_t,c)$ would be $ed(s_t,c)$ and $ed(c,e_t)$, respectively. Hence, the inequality $ed(s_t,e_t )+w_{t-1}(c,e_t)> w_{t-1}(s_t,c)$ turns to $ed(s_t,e_t)+ed(c,e_t) > ed(s_t,c)$ that always holds. Consequently, the following inequality  holds:
\begin{equation}
\label{eq:9}
\begin{split}
& w_t(a,b) + w_t(b,c) \\ 
& \geq w_{t-1} (s_t,c)+\frac{|\dv{t}|}{2} \\
& \geq w_t(s_t,c) \\
& \geq w_t(a,c)
\end{split}
\end{equation}
\item ``The edge $ab$ is $s_j e_j$ (i.e., $a = s_j$ and $b = e_j$), where $j<t$''.

In this case, $\bigtriangleup abc$ in $G''_{t-1}$ satisfies the \condition. 
Two following scenarios are possible during the  step $t^\text{th}$: 

1) The edges' weight of $\bigtriangleup abc$ do not change in $t^\text{th}$ step. In this case, obviously, $\bigtriangleup abc$ still satisfies the \condition\ in $t^\text{th}$ step.   

2) The weights of the edges $ac$ and $bc$ increase by $\frac{|\dv{t}|}{2}$ (Note that this scenario happens when $c$ is either $s_t$ or $e_t$). In this case, the following equations show that $\bigtriangleup abc$  still satisfies the triangle inequality:
\begin{equation}
\begin{split}
& w_t(a,c)+ w_t(b,c) \\ &  = w_t(s_j,c) + w_t(e_j,c) \\
& =w_{t-1}(s_j,c) + \frac{|\dv{t}|}{2} \\ & + w_{t-1}(e_j,c) + \frac{|\dv{t}|}{2} \\
& \geq w_{t-1}(s_j,c) +w_{t-1}(e_j,c) \\
& \geq w_{t-1}(s_j,e_j)  = w_t(s_j,e_j) = w_t(a,b)
\end{split}
\end{equation}

and
\begin{equation}
\begin{split}
w_t(a,b)+ w_t(b,c) & =w_{t}(s_j,e_j)+w_t(e_j,c) \\
 & = w_{t-1}(s_j,e_j)+w_{t-1}(e_j,c) \\ & +\frac{|\dv{t}|}{2} \\
& \geq w_{t-1}(s_j,c) + \frac{|\dv{t}|}{2} \\ & = w_t(s_j,c) \\ & = w_t(a,c)
 \\
\end{split}
\end{equation}

\item ``The edge $ab$ is $s_j e_j$, where $j>t$''.

In this case, the triangle $\bigtriangleup abc$ may violate the \condition\  in the graph $G''_{t-1}$. Hence, it may violate the \condition\ in $G''_t$ too.

\item ``$\bigtriangleup abc$ does not have any edge in the form of $s_j e_j$ for some $j$ (i.e., it does not have any subpath edge), and the weights of the edges $ac$ and $bc$ change during the $t^\text{th}$ step''. \footnote{Note that during each step of IETI for each triangle either no weight is updated (category e) or two weights are updated (category d).}

In this case, the triangle $\bigtriangleup abc$ satisfies the \condition\ condition in the graph $G''_{t-1}$ . During the $t^\text{th}$ step, the weights of the edges $ac$ and $bc$ increase by $\frac{\dv{t}}{2}$. The following equations show that this triangle still satisfies the \condition\  in $G''_t$:
\begin{equation}
\begin{split}
w_t(a,c)+ w_t(b,c)& = w_{t-1}(a,c)+\frac{\dv{t}}{2} \\
& w_{t-1}(b,c)+ \frac{|\dv{t}|}{2} \\
& \geq w_{t-1}(a,c) +w_{t-1}(b,c) \\
& \geq w_{t-1}(a,b) = w_t(a,b)
\end{split}
\end{equation}
\begin{equation}
\begin{split}
w_t(a,b)+ w_t(b,c) & = w_{t-1}(a,b)+ w_{t-1}(b,c)\\ & + \frac{|\dv{t}|}{2} \geq w_{t-1} (a,c)\\ &+\frac{|\dv{t}|}{2} = w_t(a,c)
\end{split}
\end{equation}
\item ``$\bigtriangleup abc$ does not have any edge in the form of $s_j e_j$ for some $j$ (i.e., it does not have any subpath edge) and the weight of no edge of the triangle change during the $t^\text{th}$ step''. 

Since $\bigtriangleup abc$ satisfies the \condition\ condition in $G''_{t-1}$, it also satisfies the inequality in $G''_t$. 

\item ``$\bigtriangleup abc$ has one edge with the infinity weight''.

The triangle violates the \condition\ condition in both $G''_{t-1}$ and $G''_t$.

\item ``$\bigtriangleup abc$ has two or three edges with the infinity weight''.

It follows obviously that  $\bigtriangleup abc$ satisfies the \condition\ condition in both $G''_{t-1}$ and $G''_t$
\end{enumerate}
\end{enumerate}

Thus, after the execution of the $t^\text{th}$ step, only the triangles with one infinity weight edge (category f) or with edges in the form of $s_je_j$ for some $j$ greater than $t$ (category c) may violate the \condition\  in $G''_t$. Therefore, the statement holds in $G''_t$.  

The statement was proven, which means that it holds in $G''_n = G''$.

As a result, the \condition\ condition is satisfied by every triangle in the graph  $G''$ (the output graph of IETI) except for those with an infinity edge.  These triangles are either in the form of $\bigtriangleup s_im_id$ or $\bigtriangleup e_im_id$, where the edge $m_id$ is an infinite edge. The theorem is proven.
\end{proof}

The following theorem shows that the TSP tours in a given graph of IETI, i.e., $G'$ (the result of SPP to TSP transformation for a given SPP workspace) and the output graph of IETI, i.e., $G''$, are the same. This implies that the TSP tour in $G''$ is equivalent to the SPP solution of the given workspace. 
\begin{theorem} The TSP tours in $G'$ and $G''$ are the same. \qed 
\label{lemma:1}
\end{theorem}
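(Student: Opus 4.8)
The plan is to prove something slightly stronger than the statement, namely that IETI preserves the weight of \emph{every} finite Hamiltonian tour of $G'$ (and, trivially, keeps every infinite tour infinite). Once that is in hand, the full ordering of Hamiltonian tours by weight is identical in $G'$ and $G''$, so in particular the minimum-weight tour is the same object in both graphs, which is exactly what is claimed. This is really the content of the ``equal and symmetric'' remark the authors make right after \eqref{eq:3}, made precise.

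First I would record the structural fact already established in the proof of Theorem~\ref{th:1}: since $m_i$ has only the two finite edges $s_im_i$ and $m_ie_i$, any finite Hamiltonian tour $T$ (of $G'$, equivalently of $G''$, as the two graphs share the same vertex set and the same set of finite-weight edges) must use both of these. Hence the degree-$2$ vertex $s_i$ is incident in $T$ to $s_im_i$ and to exactly one further edge $q_1s_i$, and likewise $e_i$ is incident to $m_ie_i$ and to exactly one further edge $e_iq_2$; moreover, when $n\ge 2$, this further edge cannot be $s_ie_i$ (that would close the $3$-cycle $s_im_ie_i$, which is not spanning since $|V(G')|=3n>3$), so $q_1,q_2\notin\{s_i,e_i,m_i\}$. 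Thus, among the edges whose weight IETI can alter during step $i$, the tour $T$ uses precisely the two ``internal'' edges $s_im_i,m_ie_i$ and exactly two ``crossing'' edges $q_1s_i,e_iq_2$, and it uses neither $s_ie_i$ nor any infinite edge.

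Next I would do the bookkeeping for a single step $i$. If $dv_i\ge 0$ nothing changes and there is nothing to prove. If $dv_i<0$, then by \eqref{eq:2} each of $s_im_i$ and $m_ie_i$ loses $\frac{|dv_i|}{2}$, the edge $s_ie_i$ loses $|dv_i|$ (but is irrelevant by the previous paragraph), and by \eqref{eq:3} every edge $qs_i$ or $qe_i$ with $q\notin\{s_i,e_i,m_i\}$ gains $\frac{|dv_i|}{2}$ (infinite ones staying infinite, so the finite/infinite status of every edge is untouched). Therefore the net change to $w(T)$ is $-2\cdot\frac{|dv_i|}{2}+2\cdot\frac{|dv_i|}{2}=0$. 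Since $G''$ is obtained from $G'$ by performing steps $1,\dots,n$ in order, and the argument above depends only on the incidence pattern of $T$ and not on the current weights, a one-line induction on the step index gives $w_{G''}(T)=w_{G'}(T)$ for every finite Hamiltonian tour $T$; the degenerate case $n=1$ needs no work since then $dv_1=0$ and IETI does nothing. As the set of finite Hamiltonian tours and all their weights agree in $G'$ and $G''$, the minimizers agree, completing the proof.

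The only place that needs care — the main obstacle — is the structural step: showing that in an \emph{arbitrary} finite Hamiltonian tour the three vertices of subpath $i$ appear as a block traversed by $s_i\!-\!m_i\!-\!e_i$ with exactly one free tour-edge leaving $s_i$ and one leaving $e_i$, both to vertices outside $\{s_i,m_i,e_i\}$. This is where finiteness of the tour and the hypothesis $n\ge 2$ are used; everything afterward is the cancellation $-\tfrac{|dv_i|}{2}-\tfrac{|dv_i|}{2}+\tfrac{|dv_i|}{2}+\tfrac{|dv_i|}{2}=0$.
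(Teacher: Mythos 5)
Your proposal is correct and follows essentially the same route as the paper: show every finite Hamiltonian tour must traverse each subpath as the block $s_i\!-\!m_i\!-\!e_i$ with two crossing edges to outside vertices, and then observe that each IETI step changes such a tour's weight by $-2\cdot\frac{|dv_i|}{2}+2\cdot\frac{|dv_i|}{2}=0$, so all tour weights (hence the minimizer) coincide in $G'$ and $G''$. Your version is in fact slightly tidier on the bookkeeping (per-step cancellation, the exclusion of $s_ie_i$ as a crossing edge, and the $n=1$ case), but the underlying argument is the paper's.
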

\begin{proof} To prove this theorem, we show that the TSP tours of $G'$ and $G''$ are the same in terms of length and sequence of nodes. In other words, we show that the length of each finite Hamiltonian tour of $G'$ is equal to the length of its corresponding finite Hamiltonian tour of $G''$ (A Hamilton tour over $G'$ and another one over $G''$ corresponds to each other, if they have the same sequence of nodes).

As already stated, any finite Hamiltonian tours in both $G'$ and $G''$ include all edges is the form of $s_i m_i$ and $m_i e_i$ (for any possible index $i$).  Note that in such a tour, for any $i$, $m_ie_i$ and $s_im_i$ happen consecutively and are connected through $m_i$ (as a sequence in the form of either $s_i$-$m_i$-$e_i$ or $e_i$-$m_i$-$s_i$). Let us call these two consecutive edges  {\em  "pair of edges of $i^\text{th}$ subpath"}. As a result, each finite Hamiltonian tour in either $G'$ or $G''$ is a sequence of pairs of edges connected via some edges. Fig. \ref{fig:3} presents an example of Hamiltonian tour over $G'$ and $G''$.   
\begin{figure}
\centering
\includegraphics[width=0.7\textwidth]{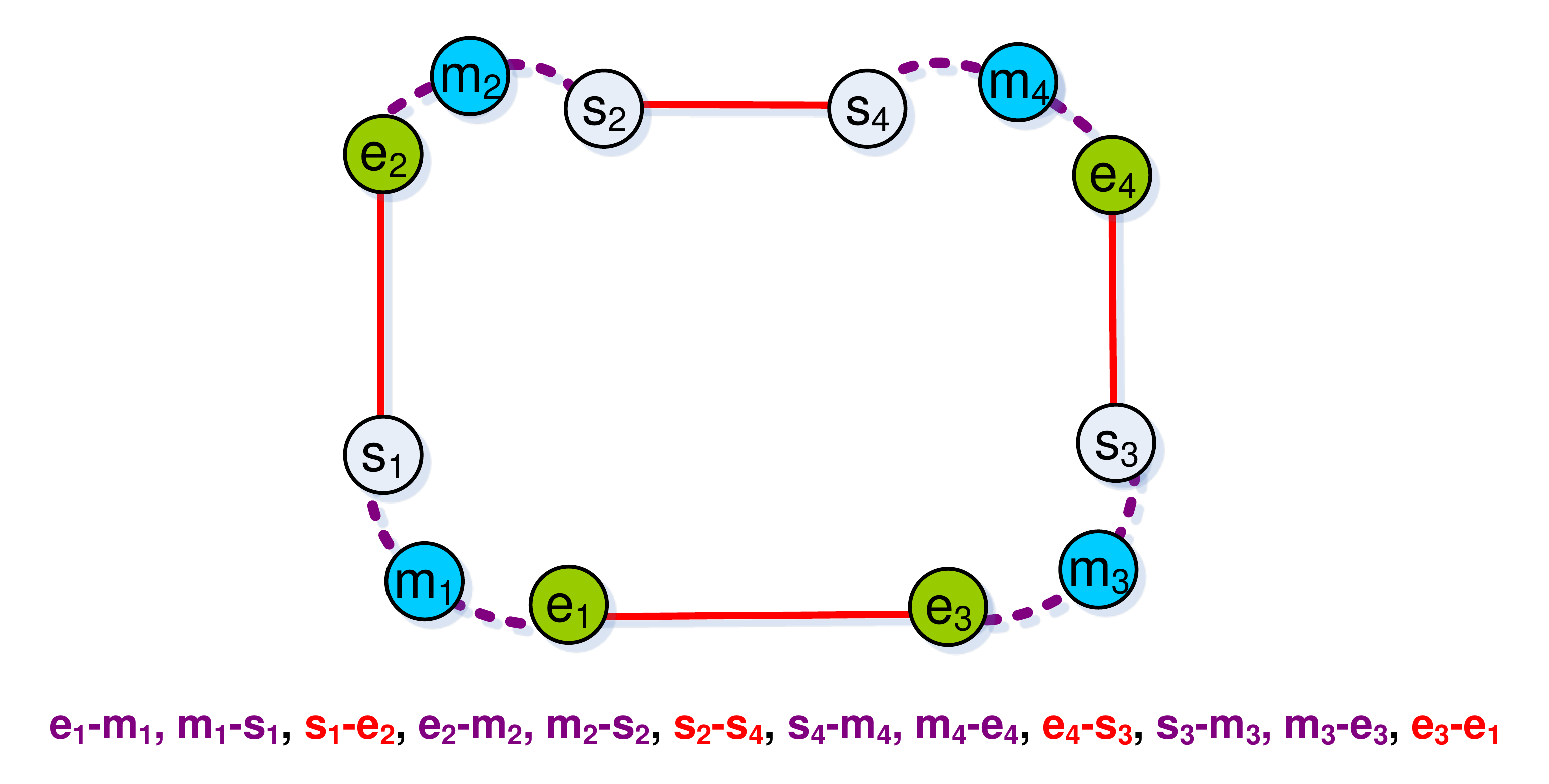}
\caption{Example of Hamiltonian tour over $G'$ and $G''$. The pair of edges are shown by dashed arrows and other arrows represent the connecting edges between the pair of edges.}
\label{fig:3}
\end{figure}

Note that the graph $G'$ differs from the graph $G''$ in their weights of the edges. Let $H$ be a finite Hamiltonian tour over $G'$  and $H'$ be its corresponding finite Hamiltonian tour over $G''$ (suppose that the number of subpaths is $n$). Without loss of generality, we let the pairs of edges occur in $H$ (also in $H'$) according the usual order of their indices, i.e., for any index $i$ less than $n$, the $(i+1)^\text{th}$ pair happens exactly after the $i^\text{th}$ pair of edges. Suppose that the corresponding node $p$ in $G'$ is $p'$ in $G''$.

\[H: e_1m_1, m_1s_1, s_1e_2, ..., s_{i-1}e_i, e_im_i, m_is_i, s_ie_{i+1}, ...,\]
\[e_{n-1}s_n, s_nm_n,m_ne_n\]
and 
\[H': e'_1m'_1, m'_1s'_1, s'_1e'_2, ..., s'_{i-1}e'_i, e'_im'_i, m'_is'_i, s'_ie'_{i+1}, ..., \]
\[e'_{n-1}s'_n, s'_nm'_n,m'_ne'_n\]

During the $i^\text{th}$ of the IETI procedure over $G'$, only the weights of the edges in $H$, which are in the form of $e_im_i$, $m_is_i$, $s_ie_i$, $e_is_{i-1}$, and $s_ie_{i+1}$, may change and others are left without any changes. Thus, the following equations hold:
%
\begin{equation}
w(e_i m_i )+w(m_i,s_i )=w(e_i',m_i' )+w(m_i',s_i' )+|\dv{i}|
\end{equation}
\begin{equation}
\begin{split}
&w(s_{i-1},e_i )=w(s'_{i-1} ,e'_i )-\frac{|\dv{i}|}{2} \\ 
& w(s_i ,e_{i+1} )=w(s_i ,e_{i+1} )-\frac{|\dv{i}|}{2}
\end{split}
\end{equation}

The above equations show that changes made in weights in each step of IETI do not make the length of the corresponding tours $H$ and $H'$  unequal. 
Therefore, the length of each finite Hamilton tour in $G'$ is equal to its corresponding tour in $G''$.
%
\end{proof}

\subsection{Pseudo Code and Complexity Analysis of IETI} \label{sub:IETIcode}
Algorithm 2 presents the pseudo code of the IETI algorithm. This algorithm takes a graph $G'$ (the output of the SPP to TSP transformation) and returns a graph $G''$.  
%
%
\begin{algorithm}
\label{alg:1}
\begin{algorithmic}[1]
\caption{: TSP Transformation and IETI}
\FOR{each subpath edge $s_ie_i$}
\STATE Find node $d \in G$ that minimizes $R=w(s_i,d) + w(d,e_i)$
\STATE	$dv \leftarrow \frac{R-w(s_i,e_i)}{2}$
\IF{$dv < 0$}
\STATE $w(s_i,e_i) \leftarrow w(s_i,e_i)-|dv|$
\STATE $w(s_i,m_i) \leftarrow \frac{w(s_i,e_i)}{2}$
\STATE $w(e_i,m_i) \leftarrow \frac{w(s_i,e_i)}{2}$
\FOR{each node $q \in G$ where $q\notin \{s_i,e_i,m_i\}$}
\STATE $w(s_i,q) \leftarrow w(s_i,q)-\frac{|dv|}{2}$
\STATE $w(e_i,q) \leftarrow w(e_i,q)-\frac{|dv|}{2}$
\ENDFOR
\ENDIF
\ENDFOR
\end{algorithmic}
\end{algorithm}

The loop of $n$ iterations in lines 1 to 13 updates the weights of the subpath edges. There exist two loops within these lines each of which iterates $2n$ times (lines 2 and 8 to 11). Thus, the time complexity of the algorithm is in the $O(n^2)$ class.

\section{A 2-approximation Algorithm} \label{CSPP}

The Christofides' algorithm \cite{Christofides_worst-case_1976} is one the most efficient approximation algorithms for solving TSP, which works for given graphs satisfying the \condition\ condition. This algorithm has time complexity of $O(n^3 )$ and the ratio bound of 1.5. Due to its complexity and ratio bound, the Christofides' algorithm is a popular approximation method for TSP.

According to Theorem \ref{lemma:1}, the TSP tour in $G''$ is equivalent to the solution of SPP. However,  $G''$ violates the \condition\ condition. Therefore, it is not feasible to use Christofides' algorithm (or any other existing fixed-ratio bound approximation algorithms) to find the TSP tour in $G''$. Nonetheless, the \condition\ is violated by some special triangles in $G''$. By using this special feature, an approximation algorithm for finding the TSP tour over $G''$, called {\em Christofides for SPP} (CSPP), is proposed with $O(n^3 )$ time complexity and ratio bound of 2. CSPP can be seen as a modified version of Christofides' algorithm.

The CSPP algorithm contains one additional step in comparison with the Christofides' algorithm. Moreover, one step of Christofides is modified in CSPP. 

The plan of this section is as follows. In \ref{sub:CSPPproc}, we discuss the CSPP algorithm. In \ref{sub:retio}, we show that the ratio bound of the CSPP algorithm is 2. We finally discuss the time complexity of the algorithm in \ref{sub:CSPPcomplexity}. 

\subsection{CSPP: the procedure} \label{sub:CSPPproc}
CSPP takes $G''$ (the output of the IETI algorithm) as input and returns a Hamiltonian tour as output. This algorithm consists of five steps as follows:

{\bf Step 1: Finding the Minimum Spanning Tree}\\
This step finds the minimum spanning tree (MST) over $G''$ (one of the nodes is arbitrarily chosen as the root of the tree). This step is the same as the first step of the Christofides' algorithm \cite{Christofides_worst-case_1976}. 

{\bf Step 2: Modify MST  by Adding Subpath Edges}\\
The MST (the result of the first step) does not include an edge with infinity weight. As a consequence, a middle node of the graph $G''$, say $m_i$ for some $i$, can be either a two-degree node in the MST connected to $s_i$ and $e_i$ or a leaf node in the MST connected to $s_i$ or $e_i$.
In this step, for each middle node as a leaf such as $m_i$ (suppose $s_i$ is the parent of $m_i$), the edge $m_ie_i$ is added to the MST to form a graph denoted by $G^*$. In $G^*$, the degree of any middle node is even. Fig \ref{fig:4} shows how this step works. As a natural consequence of this step, the overall weight of the MST increases by the sum of $w(m_i,e_i)$ for any $i$ such that $m_i$ is a leaf middle node. 


\begin{figure}
\centering
\includegraphics[width=0.7\textwidth]{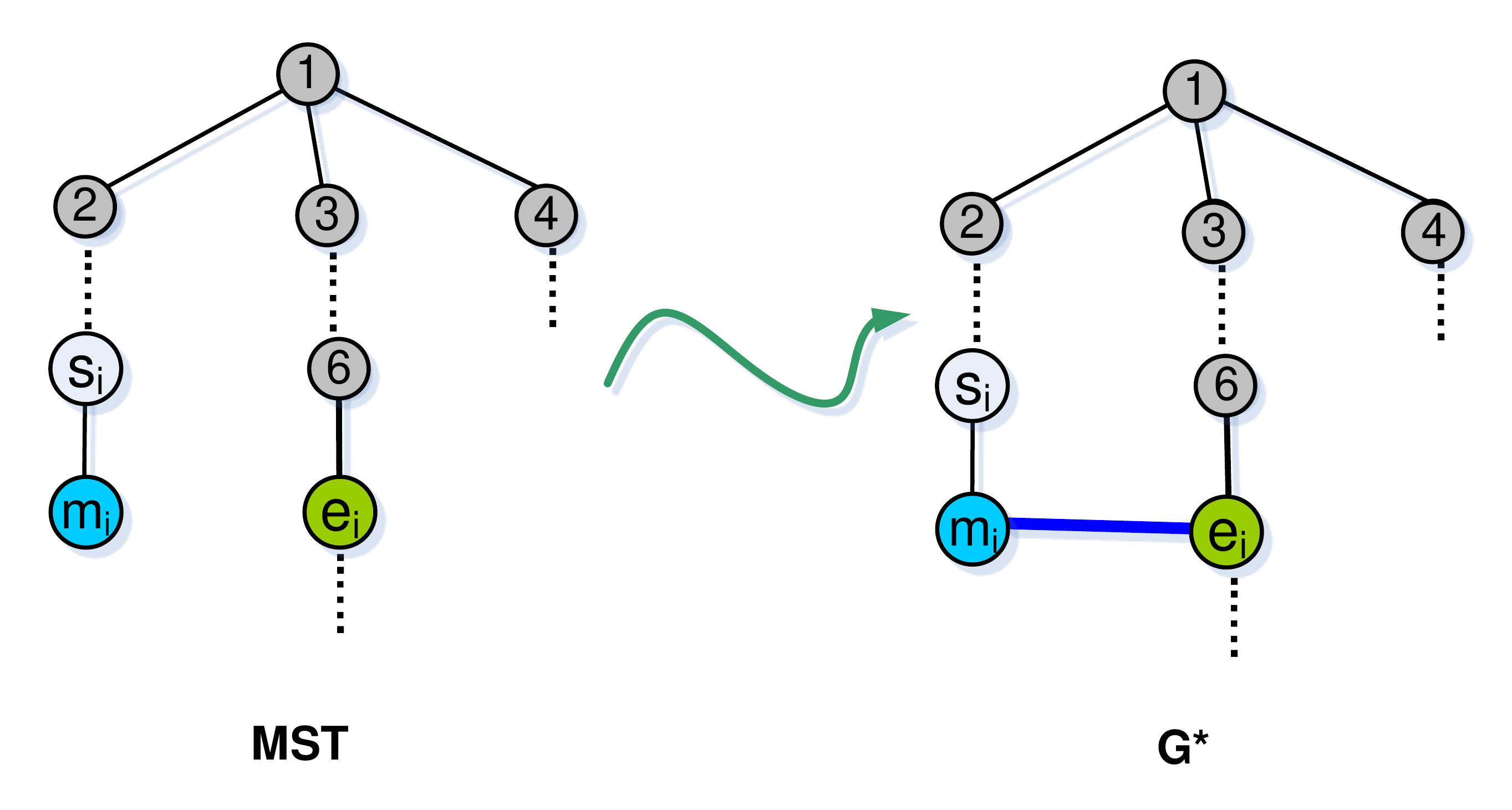}
\caption{Step 2 of CSPP, constructing graph $G^*$ from MST.}
\label{fig:4}
\end{figure}

{\bf Step 3: Minimum Perfect Matching over Odd-degree Nodes}
\\
This step, which is identical to one of the Christofides' steps \cite{Christofides_worst-case_1976}, performs {\em minimum perfect matching} in $G''$ between all odd-degree nodes in $G^*$ \footnote{Number of these nodes is even because the total degrees of nodes in a graph is even.} and adds the edges involved in perfect matching to $G^*$ to build a graph denoted by $\hat{G}$. Since there is no odd-degree middle node in $G^*$, no middle node is involved in perfect matching. As a result, no infinity weight edge is added to $G^*$. Thus $\hat{G}$ is still without any infinity weight edges.

{\bf Step 4: Finding an Eulerian Tour Over $\hat{G}$}
\\
The final output graph of Step 3, i.e., $\hat{G}$, is an Eulerian graph \footnote{a connected graph with even-degree nodes}. Thus, it has an Eulerian tour. An Eulerian tour is a sequence of nodes, which visits every edge exactly once. The current step (similar to Christofides') finds an Eulerian tour of $\hat{G}$. We call the output of this step $trail$.

{\bf Step 5: Confined Shortcut on {\em trail}}
\\
As defined above, {\em trail} is an Eulerian tour. It is possible for an Eulerian tour to visit some nodes more than once. In order to turn an Eulerian tour into a Hamiltonian tour, the extra occurrences of nodes have to be removed. An operation, called  {\it shortcut}, is used in Christofides' algorithm \cite{Christofides_worst-case_1976} to do such a transformation. However, $G''$ contains some infinity weight edges and so it is not feasible to do shortcuts like in the Christofides' algorithm. This is  because it may add infinity edges to the tour. To address this problem, we introduce a new operation, called {\em confined shortcut}. The procedure of this operation is discussed in the following. 

Consider a node $v$ such that it is visited more than once in {\em trail}. Let $w$ and $u$ denote the predecessor and successor in one of $v$'s occurrences in the tour, respectively. In this case, it is feasible  to add an edge $uw$ to the tour and remove the edges $uv$ and $vw$ to decrease the number of occurrences of $v$ by one. We keep doing this process until the number of occurrences of $v$ in the tour get to one. If one of the nodes $u$ and $w$ is a middle node, then the weight of the edge $uw$ is infinity and so performing the confined shortcut operation would result in adding an infinity edge to the tour. To resolve this problem, we avoid performing the confined shortcut operation over $u$-$v$-$w$ and in place of that the operation is done over other occurrences of $v$.
%
Lemma \ref{lemma:4} shows that performing confined shortcut over at most one of the occurrences of $v$ in {\em trail} may lead  to adding an infinity edge. Thus, in order to build a finite Hamiltonian tour, confined shortcuts can be performed over other occurrences to avoid adding infinity weight edges. In fact, differences between shortcut in  \cite{Christofides_worst-case_1976} and confined shortcut are in the two following issues:  1) Performing confined shortcuts in only three consecutive nodes in the tour; 2) avoiding performing confined shortcut whenever it leads to adding an infinity weight edge. 

In step 5, it is possible to have a sequence like $x$-$p_0$-$p_1$- \ldots - $p_m$-$y$ in which, for all $i$ ($0 \leq i \leq m$), $p_i$ is already visited. In Christofieds' shortcut, all  nodes $p_i$ are removed in one step and then edge $x$-$y$ is added to the tour. On the other hand, in each step of confined shortcut,  only one node is removed.   Lemma \ref{lem:new} shows that doing confined shortcut  on such cases is doable.   

\begin{lemma}{} For each node $v$ in  {\em trail} (the Eulerian tour generated by step 4 of the CSPP algorithm), doing confined shortcut  (step 5 of the algorithm) adds infinity wight edges for at most one of the $v$'s occurrences. \qed
\label{lemma:4}
\end{lemma}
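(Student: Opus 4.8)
The plan is to exploit the rigid structure forced on the middle nodes. First I would record the structural facts (most of which are already noted in the text). Since the only finite edges of $G''$ incident to a middle node $m_i$ are $s_im_i$ and $m_ie_i$, and Steps 1--3 use only finite edges, in the minimum spanning tree each $m_i$ has degree $1$ or $2$ with its neighbour(s) in $\{s_i,e_i\}$; Step 2 raises every leaf middle node to degree $2$, and the matching of Step 3 ignores all even-degree vertices, in particular every middle node. Hence in $\hat{G}$ each $m_i$ has degree exactly $2$ with neighbours exactly $s_i$ and $e_i$, so in the Eulerian tour $trail$ the node $m_i$ occurs exactly once, with $s_i$ on one side and $e_i$ on the other. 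Call an occurrence of a node $v$ in a tour \emph{dangerous} if the confined shortcut at that occurrence would insert an infinity edge. If the occurrence is $u$-$v$-$w$, the inserted edge is $uw$, and (assuming $n\ge 2$; for $n=1$ the graph $G''$ has no infinity edge and the claim is trivial) $uw$ is infinite exactly when one of $u,w$ is a middle node: if $u=m_k$ then $v\in\{s_k,e_k\}$, hence $w\notin\{s_k,e_k\}$ and $w\, m_k=\infty$. Thus a dangerous occurrence of $v$ must be tour-adjacent to a middle node $m_k$, and then $v\in\{s_k,e_k\}$.

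Next I would prove the assertion for $trail$ itself. If $v$ is a middle node it occurs only once and there is nothing to prove; otherwise let $j$ be the unique index with $v\in\{s_j,e_j\}$. By the previous paragraph a dangerous occurrence of $v$ is tour-adjacent to a middle node $m_k$, and $v\in\{s_k,e_k\}$ forces $k=j$. Since $m_j$ occurs exactly once in $trail$ with tour-neighbours $s_j$ and $e_j$, it is tour-adjacent to exactly one occurrence of $s_j$ and one of $e_j$. Hence $v$ has at most one dangerous occurrence in $trail$, which is the statement of the lemma.

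To make this usable throughout Step 5 (so that the shortcut phase really avoids every infinity edge), I would finally show that the invariant ``each node has at most one dangerous occurrence'' survives every confined shortcut the algorithm performs. An occurrence is dangerous exactly when it is tour-adjacent to a middle node, so it is enough to verify that a shortcut creates no new such adjacency. A shortcut that the algorithm actually performs acts on an occurrence $u$-$v$-$w$ with $u,w$ non-middle (that is the avoidance rule), deletes that occurrence of $v$ — which is non-middle, since middle nodes have multiplicity one and are never shortened — and only rewires the successor of the preceding occurrence of $u$ (from $v$ to $w$) and the predecessor of the following occurrence of $w$ (from $v$ to $u$); in both cases the newly adjacent node ($w$, resp.\ $u$) is non-middle. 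So no new dangerous occurrence is created, and since $trail$ has at most one per node, so does every intermediate tour. Consequently, whenever a node's multiplicity exceeds $1$ at least one of its occurrences is non-dangerous and can be shortened, Step 5 never inserts an infinity edge, and it halts with a finite Hamiltonian tour.

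The genuinely delicate point — the part to write carefully — is exactly this invariance, i.e.\ that the two tour-neighbours of each $m_j$ can never be disturbed. A priori a \emph{safe} shortcut on an occurrence of $e_j$ lying between $m_j$ and some other node could splice $m_j$ next to a second occurrence of $s_j$, giving $s_j$ two dangerous occurrences and threatening a deadlock; what rules this out is precisely that the algorithm's avoidance rule is phrased in terms of adjacency to a middle node (Step 5 never performs $u$-$v$-$w$ when $u$ or $w$ is a middle node), so a shortcut with $u=m_j$ is never executed and $m_j$ keeps $s_j,e_j$ as its neighbours forever. The remainder of the argument is the elementary structure of $\hat{G}$ recorded above.
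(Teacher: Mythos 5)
Your first two paragraphs are, in substance, the paper's own proof of Lemma \ref{lemma:4}: both arguments rest on the facts that in $\hat{G}$ every middle node $m_i$ has degree exactly two, with neighbours $s_i$ and $e_i$ and no multiple edge, so each node $v$ is tour-adjacent to a middle node in at most one of its occurrences in \emph{trail}, while a shortcut can insert an infinity edge only if one of the two tour-neighbours involved is a middle node. One small slip: from $u=m_k$ and $v\in\{s_k,e_k\}$ you conclude $w\notin\{s_k,e_k\}$, which does not follow --- $w$ may be the other endpoint of the $k^\text{th}$ subpath (the configuration $m_k$-$e_k$-$s_k$, with the finite inserted edge $m_ks_k$). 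So your ``$uw$ is infinite exactly when one of $u,w$ is a middle node'' must be weakened to ``only if''; since only that direction is used, the proof of the lemma itself is unaffected and matches the paper's.

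Your third paragraph goes beyond the statement being proved: the paper establishes only the static property of \emph{trail} and handles the dynamics of Step 5 separately (Lemma \ref{lem:new} and the two-pass implementation), so the invariance argument is genuinely extra, and you are right that it is the delicate part. Be aware, however, that it hinges on reading the avoidance rule as ``never shortcut an occurrence whose tour-neighbour is a middle node'', whereas the pseudo-code of CSPP tests only whether $w(t_0,t_1)$ is finite; the two rules differ precisely in the $m_k$-$e_k$-$s_k$ configuration you later single out. Under the literal finiteness test that shortcut may be performed, after which $m_k$ can become tour-adjacent to two occurrences of $s_k$ and a node can acquire two dangerous occurrences --- exactly the splice you warn about --- so the middle-node-based reading is what makes your invariant (and the termination of Step 5 without infinite edges) go through. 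This is a remark about Step 5 rather than about Lemma \ref{lemma:4}; for the lemma as stated, your first two paragraphs suffice and coincide with the paper's argument.
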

\begin{proof} According to the steps 2, 3, and 4,  $\hat{G}$ does not have any infinity edges. The node $v$ in $\hat{G}$ can be either head or tail of a subpath (i.e. $s_i$ or $e_i$ for some $i$) or a middle node (i.e., $m_i$ for some $i$). In the former case, $v$ (equal to $s_i$ or $e_i$) is not a neighbour of a middle node in $\hat{G}$, unless the neighbour middle node is $m_i$ (if $v$ is a neighbour of $m_j$ where $j \neq i$, then the edge $vm_j$ has the infinity weight in $\hat{G}$). In the latter case, $v$ is a middle node, say $m_i$. In this case, $v$ cannot have any other middle nodes, say $m_j$ for $j \neq i$, as its neighbour in $\hat{G}$, since the weight of $m_im_j$ is infinity. Therefore, only one neighbour of $v$ in $\hat{G}$ would be a middle node.

Moreover, according to operations involved in steps 1 to 4, no middle node has a {\it multiple edge} in $\hat{G}$ \footnote{A node $x$ has multiple edge in a graph, if there exists another node $y$ such that there are two or more edges between $x$ and $y$.}. As a result, at most one occurence of $v$ can appear next to a middle node in trail and doing shortcut over that specific occurence leads to adding an infinity weight edge.
\end{proof}

 In $\hat{G}$ no middle node has a multiple edge. It is worth pointing out that if $\hat{G}$ includes a multiple edge connected to a middle node, then there exist some cases like one shown in Fig. \ref{fig:5}, where  {\em trail} over graph includes sequences like $s_im_is_i$. In this case, removing each of the occurrences of $s_i$  (by using confined shortcut) leads to adding an infinity weight edge. Therefore, the step 5 of CSPP is not feasible in such cases, since it may add infinity weight edges to the Hamiltonian tour of CSPP.

\begin{lemma} \label{lem:new}
Consider an Eulerian tour with a subsequence $x$-$p_0$-$p_1$-$\ldots$-$p_m$-$y$ in which  each node $p_i$, for any $i$ ($0 \leq i \leq m$), has been already visited. Multiple applying of confined shortcut (according to step 5) on this Eulerian tour makes the number of occurrences of nodes $p_i$ (for all $i$) to reduced to 1 and also it does not add any infinity edge to the tour. \qed 
\end{lemma}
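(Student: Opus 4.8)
The plan is to reduce the claim to two structural facts about $\hat{G}$ and the Eulerian tour \emph{trail}, and then close with a termination argument. Fact (i): by the constructions of Steps~2--4, $\hat{G}$ has no infinity-weight edge, and every middle node $m_j$ has degree exactly $2$ in $\hat{G}$, with $s_j$ and $e_j$ as its only neighbours; hence $m_j$ is visited exactly once in \emph{trail}. Since each $p_i$ is ``already visited'' it is visited at least twice, so no $p_i$ is a middle node --- every $p_i$ equals some $s_j$ or $e_j$. Fact (ii): an edge of the (complete) graph $G''$ has infinity weight exactly when one endpoint is a middle node $m_j$ and the other lies outside $\{s_j,e_j\}$; consequently every edge between two non-middle nodes is finite, and so are the edges $m_j s_j$ and $m_j e_j$.

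Next I would analyse one confined shortcut. Replacing an occurrence $w$-$v$-$u$ in the tour by the edge $wu$ introduces an infinity edge only if one of $w,u$ is a middle node whose mate is not the other; in particular, if both neighbours of the shortcut occurrence are non-middle the step is safe. By Lemma~\ref{lemma:4}, every node has at most one occurrence in \emph{trail} that sits next to a middle node; moreover no shortcut in our process ever deletes a middle node (middle nodes are visited once, hence never surplus, hence never shortcut), and a safe shortcut inserts an edge between two non-middle nodes, so it creates no new middle-node adjacency. Therefore ``each node has at most one occurrence adjacent to a middle node'' is an invariant of any sequence of safe shortcuts, and so, as long as some $p_i$ is still visited at least twice, it has an occurrence flanked by two non-middle nodes at which a safe shortcut is available.

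I would then carry out the shortcuts on the run $x$-$p_0$-$p_1$-$\ldots$-$p_m$-$y$ as follows. First peel the interior nodes: while $m\geq 1$, the current occurrence of an interior $p_i$ inside the run is flanked by two of the remaining $p$'s, both non-middle, so it is shortcut safely, and the run shrinks to $x$-$p_0$-$p_m$-$y$ (to $x$-$p_0$-$y$ if $m=0$). Then handle $p_0$ (and symmetrically $p_m$): if its run-neighbours $x$ and $p_m$ are both non-middle, shortcut there; if instead $x=m_k$, then $p_0\in\{s_k,e_k\}$, and either $p_m\in\{s_k,e_k\}$, so $xp_m$ is the finite edge between $m_k$ and its mate and we shortcut there, or else the run-occurrence of $p_0$ is exactly its unique occurrence adjacent to a middle node, in which case we lower the multiplicity of $p_0$ by safely shortcutting its other, non-middle-adjacent occurrences until only the run-occurrence survives --- and that occurrence is joined to $m_k$ by the finite edge $m_k p_0$. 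In every case each $p_i$ is driven to multiplicity one. Since each shortcut strictly lowers the total number of occurrences of the $p_i$'s and, by the previous paragraph, a safe one is always available, the process terminates with the stated conclusion and no infinity edge ever added.

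The step I expect to be the crux is establishing that the ``shortcut a different occurrence'' escape can never be blocked --- that one is never forced into a shortcut that would insert an infinity edge. This is precisely what Fact (i), Lemma~\ref{lemma:4}, and the preservation of the ``at most one bad occurrence'' invariant provide. A secondary technicality, which does not interact with the infinity-edge issue, is that $\hat{G}$ may contain a parallel pair of non-middle edges (one from the spanning tree, one from the matching), so a shortcut occurrence must also be chosen so as not to create a self-loop; the bounded edge-multiplicity of $\hat{G}$ makes suitable occurrences available.
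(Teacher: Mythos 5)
Your overall route is the same as the paper's: show no $p_i$ is a middle node (middle nodes have degree $2$ in $\hat{G}$, hence appear once in \emph{trail}), peel the interior of the run with safe shortcuts, and for an endpoint $p_0$ (or $p_m$) blocked by a middle flank invoke Lemma~\ref{lemma:4} to delete its \emph{other} occurrences instead. Your explicit preservation of the invariant ``each node has at most one occurrence adjacent to a middle node'' is a welcome sharpening of a point the paper leaves implicit. However, one branch of your endpoint analysis is genuinely flawed and contradicts that very invariant: in the sub-case $x=m_k$, $p_0\in\{s_k,e_k\}$ and $p_m$ equal to the mate of $p_0$, you shortcut $p_0$ out of the run by inserting the finite edge $m_k p_m$. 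That inserted edge is incident to a middle node, so it \emph{does} create a new middle-node adjacency for $p_m$. Concretely, since $m_k$ occurs exactly once in the tour, flanked by $s_k$ and $e_k$, the occurrence of $p_m$ immediately preceding $x=m_k$ is already $p_m$'s unique middle-adjacent occurrence; after your move the tour reads $\ldots$-$z$-$p_m$-$m_k$-$p_m$-$y$-$\ldots$, and if these are the only two occurrences of $p_m$ (degree $4$ at $p_m$ in $\hat{G}$, a realizable configuration) then deleting the first adds the infinity edge $z m_k$ (when $z\neq p_0$) and deleting the second adds the infinity edge $m_k y$ (when $y\neq p_0$), so $p_m$ can no longer be reduced to multiplicity one and the procedure is stranded. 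Lemma~\ref{lemma:4} cannot rescue you here, because it bounds the blocked occurrences only in the original \emph{trail}, and your move is precisely the kind that manufactures a second blocked occurrence.

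The fix is simply to delete this special branch: when $x=m_k$ the run-occurrence of $p_0$ is its unique middle-adjacent occurrence regardless of what $p_m$ is, so your ``or else'' escape (keep the run-occurrence of $p_0$, remove its other occurrences) already covers the case, and $p_m$ can then be removed at the run via the safe shortcut $p_0$-$p_m$-$y$ --- note that when $p_m$ is the mate of $p_0$, $y$ cannot be a middle node (a finite edge $p_m y$ with $y$ middle would force $y=m_k$, whose two edge slots are already used at $x$), so that shortcut is always available. With that branch removed, your argument coincides with the paper's case analysis on whether $x$ and $y$ are middle nodes, your invariant claim becomes true of every shortcut you perform, and the termination argument goes through. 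Your side remark about parallel edges and self-loops is a reasonable extra caution but is not where the difficulty lies.
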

\begin{proof} Since each $p_i$ has been visited already before $x$ in the Eulerian tour, the number of its occurrences in the tour would be more than 1.  The degree of each middle node in $\hat{G}$ is 2.  Therefore, the number of occurrences of middle nodes in an Eulerian tour of this graph would be 1.  This implies that none of the nodes $p_i$  ($1\leq i \leq m$) is a middle node (note the premise in the lemma stating that $p_i$ has been already visited).  Thus, only one of the following cases would be possible: 
\begin{enumerate}
\item $x$ is a middle node and $y$ is not: 

In this case, confined shortcut  acts in the way as follows. First, the sequence with the first three nodes, i.e., $x$-$p_0$-$p_1$, is investigated. Since $x$ is a middle node, removing $p_0$ results in adding the edge $x$-$p_1$, which has infinity weight. Thus, confined shortcut is not applicable on this sequence, which means $p_0$ is not removed.  In the next sequence, i.e., $p_0$-$p_1$-$p_2$, confined shortcut is applicable, since the nodes $p_0$ and $p_1$ are not middle nodes. Applying confined shortcut on this sequence, $p_1$ is removed, the edge $p_0$-$p_2$ is added into the tour, and the subsequence  would be turned into $x$-$p_0$-$p_2$-$\cdots$-$p_m$-$y$. Similarly, in the next step, confined shortcut is applied on $p_0$-$p_2$-$p_3$ and $p_2$ is removed.  This procedure is continued until the subseqeunce is turned into $x$-$p_0$-$y$. Since, according to Lemma \ref{lemma:4}, applying confined shortcut on at most one of the $p_0$'s occurrences makes adding an infinity edge (in this case, $x$-$p_0$-$p_1$), confined shortcut can be applied on other occurrences of $p_0$ to make the number of its occurrences equal to 1. 
\item $y$ is a middle node and $x$ is not:

Like the case 1, multiple applying of confined shortcut, the subsequence would be turned into $x$-$p_m$-$y$. Since $y$ is a middle node, applying confined shortcut to remove $p_m$ would be impossible. However, according to Lemma \ref{lemma:4}, confined shortcut can be applied on other occurrences of $p_m$ to make the number of its occurrences equal to 1. 
\item Both $x$ and $y$ are middle nodes:

In this case, both $p_0$ and $p_m$ would be problematic, i.e., applying confined shortcut on the tour does not remove $p_0$ and $p_m$. In other words, the subsequence eventually would be turned into $x$-$p_0$-$p_m$-$y$. However, again according to Lemma \ref{lemma:4}, this means that other occurrences of both $p_0$ and $p_m$ would be removed by applying confined shortcut, which makes their number of occurrences to reduced to 1. 
\item None of $x$ and $y$ is middle node:

In this case, all nodes $p_0, \ldots, p_m$ are removed by multiple applying of confined shortcut. Then the tour would be turned into $x$-$y$. 
\end{enumerate} 
As we saw above, in all cases, the number of occurrences of each of the  nodes $p_0, \cdots, p_m$ is reduced to 1 by multiple applying of confined shortcut without adding any  infinity edge. 
\end{proof}

After the fifth step, according to Lemma \ref{lemma:4}, the resulting Hamiltonian tour, denoted by {\em h-trail}, does not include any infinity weight edge.
\begin{figure}
\centering
\includegraphics[scale=0.5]{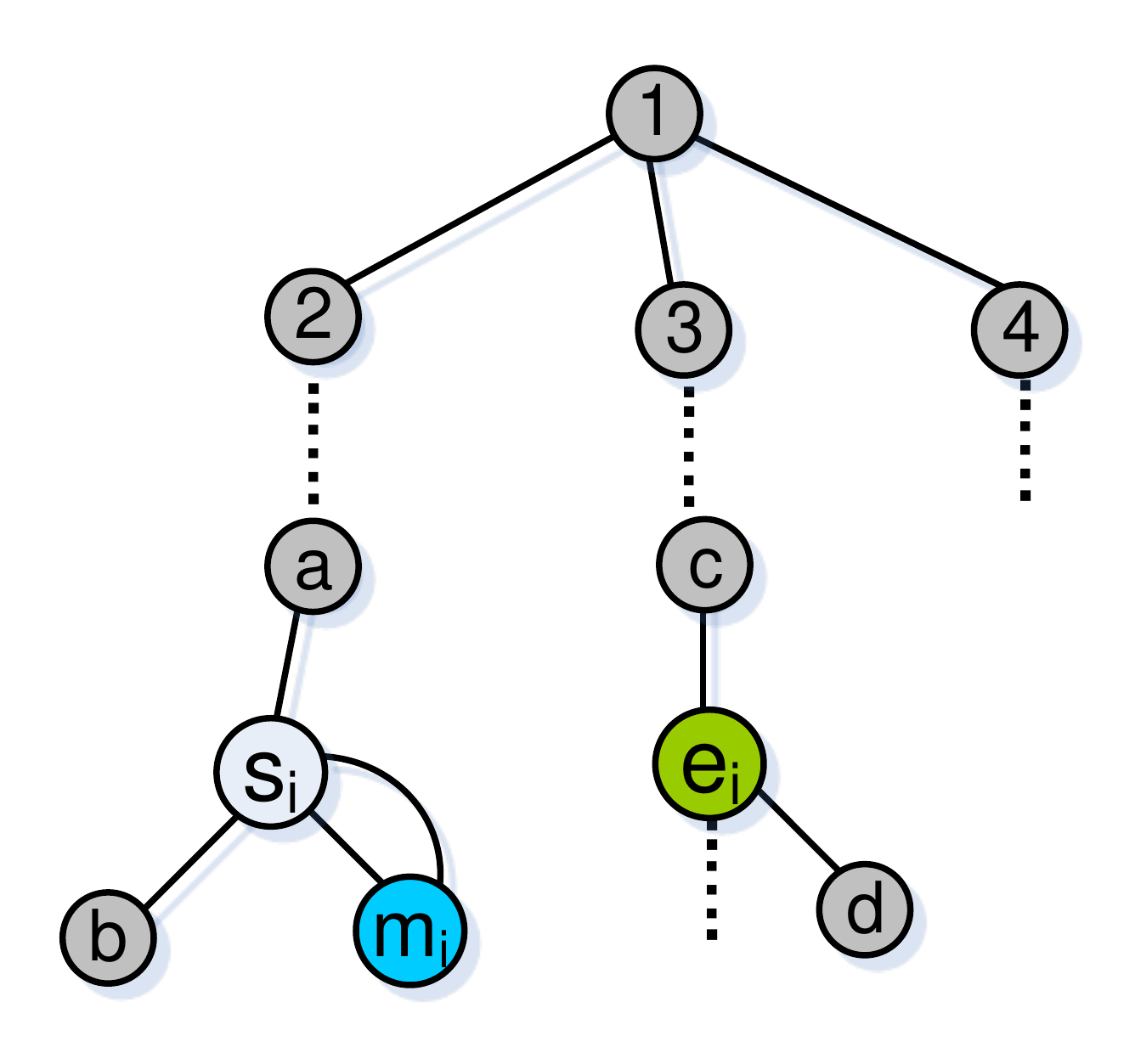}
\caption{If a middle node has a multiple edge then step 5 of CSPP may add infinite edge to the tour.}
\label{fig:5}
\end{figure}

\subsection{Ratio Bound of CSPP Algorithm} \label{sub:retio}

In this section, it is shown that the ratio bound of CSPP is 2. Lemmas \ref{lem:Ch1} and \ref{lem:Ch2} are adapted from \cite{Christofides_worst-case_1976}. The proofs of them can be found in Appendix. \ref{app}.

\begin{lemma}{} The weight of MST is less than the weight of $T^*$, where $T^*$ is the optimal TSP tour in $G''$. \qed \label{lem:Ch1}
\end{lemma}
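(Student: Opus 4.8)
The plan is to reproduce the classical argument underlying Christofides' first lemma — that a minimum spanning tree is strictly cheaper than any Hamiltonian tour — while paying attention to two features specific to $G''$: that the objects involved are finite, and that the inequality is \emph{strict}.

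First I would recall, via Theorem \ref{th:1} and Theorem \ref{lemma:1}, that the optimal TSP tour $T^*$ in $G''$ is a \emph{finite} Hamiltonian tour: it must avoid every infinity-weight edge (otherwise its weight would be $\infty$, whereas $G''$ admits finite Hamiltonian tours such as the sample exhibited in the proof of Theorem \ref{th:1}), so $w(T^*) < \infty$. In particular $G''$ possesses a finite connected spanning subgraph, hence a finite MST exists and Step~1 of CSPP is well defined.

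Next I would delete an arbitrary edge $e$ from $T^*$. What remains is a Hamiltonian path $P$ on the vertex set $V(G'')$: it is connected, acyclic, spans all vertices, and has exactly $|V(G'')|-1$ edges, so it is a spanning tree of $G''$. Because the MST has minimum weight among all spanning trees of $G''$, we get $w(\mathrm{MST}) \le w(P) = w(T^*) - w(e)$.

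Finally, to upgrade this to a strict inequality I would argue $w(e) > 0$. Every edge weight in $G''$ is either a subpath or half-subpath length, a Euclidean distance between two distinct workspace points, or one of these quantities perturbed by the symmetric IETI updates ($-|\dv{i}|$ on a subpath edge, $-|\dv{i}|/2$ on a double subpath edge, $+|\dv{i}|/2$ on a connecting edge); in every case the resulting weight remains strictly positive — indeed the analysis in the proof of Theorem \ref{th:2} shows that each modified weight stays bounded below by the corresponding Euclidean distance, which is positive since its two endpoints are distinct points of the workspace. Hence $w(\mathrm{MST}) \le w(T^*) - w(e) < w(T^*)$, as claimed. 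The step I expect to require the most care is precisely this last positivity argument, which is what makes the inequality strict; the rest is the textbook ``remove one tour edge to obtain a spanning tree'' observation. (Were one content with the non-strict bound $w(\mathrm{MST}) \le w(T^*)$, the positivity issue would disappear, but since the statement asserts strictness I would make it explicit.)
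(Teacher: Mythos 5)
Your proposal is correct and follows essentially the same route as the paper's own proof: delete one edge of $T^*$ to obtain a spanning tree (the paper misnames it a ``Hamiltonian tour,'' but means exactly this Hamiltonian path) and invoke minimality of the MST to get $W(\mathrm{MST}) \leq W(T^*) - w(e) < W(T^*)$. Your extra care about $T^*$ avoiding infinity-weight edges and about positivity of the removed edge's weight only makes explicit what the paper leaves implicit in asserting the strict inequality.
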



\begin{remark}
The notation $W(H)$ denotes the sum of edge weights of a given graph $H$.  $T^*$ and $PM^*$ denote the TSP tour in $G''$ and edges of perfect matching in step 3, respectively.
\end{remark}
\begin{lemma} \label{lem:33} The total added weight to MST during the second step is less than the half of the weight of $T^*$. \qed
\end{lemma}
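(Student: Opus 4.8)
The plan is to bound the total weight added to the MST in Step~2 of CSPP by the weight of \emph{half} of the double subpath edges, and then to argue that this quantity is at most $\tfrac12 W(T^*)$ because any TSP tour $T^*$ over $G''$ must itself contain, for every $i$, the full pair of edges $s_im_i$ and $m_ie_i$. Concretely, Step~2 adds, for every leaf middle node $m_i$ (say with parent $s_i$ in the MST), exactly the edge $m_ie_i$, whose weight is half the (current, IETI-modified) length of the $i^\text{th}$ subpath. So the total added weight equals $\sum_{i\in L} w(m_i,e_i)$, where $L$ is the set of indices whose middle node is a leaf of the MST.

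First I would observe that for \emph{every} $i$ (leaf or not) we have $w(s_i,m_i)+w(m_i,e_i)\le W(T^*)\big/1$ is too weak; instead the key point, already established in the proof of Theorem~\ref{th:1} and reused in Lemma~\ref{lemma:1}, is that the only finite edges incident to $m_i$ are $s_im_i$ and $m_ie_i$, so any finite Hamiltonian tour — in particular the optimal tour $T^*$ — must use both of them. Hence
\[
\sum_{i=1}^{n}\bigl(w(s_i,m_i)+w(m_i,e_i)\bigr)\le W(T^*),
\]
since these $2n$ edges are distinct edges of $T^*$. Because $w(s_i,m_i)=w(m_i,e_i)$ for each $i$ (the two double subpath edges have equal weight, both before and after IETI, by construction and by equation~(\ref{eq:2})), the left-hand side equals $2\sum_{i=1}^n w(m_i,e_i)$, so $\sum_{i=1}^n w(m_i,e_i)\le \tfrac12 W(T^*)$.

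Finally I would conclude by noting that the weight added in Step~2 is $\sum_{i\in L} w(m_i,e_i)$, which is a sub-sum of the nonnegative terms $\sum_{i=1}^n w(m_i,e_i)$, hence at most $\tfrac12 W(T^*)$; since $L$ is a proper subset whenever at least one middle node has degree~$2$ in the MST — and the MST over $G''$ cannot have all middle nodes as degree-$2$ internal vertices connected to both $s_i$ and $e_i$ without creating the cheaper option of dropping one such edge, so the inequality is in fact strict — we obtain that the total added weight is \emph{less than} $\tfrac12 W(T^*)$, as claimed. The main obstacle I anticipate is the strictness of the inequality: one must rule out the degenerate situation where $L=\{1,\dots,n\}$ and simultaneously $\sum w(m_i,e_i)=\tfrac12 W(T^*)$; this requires arguing that a minimum spanning tree never attaches a middle node as an internal degree-$2$ vertex when doing so is avoidable, i.e. that at least one $m_i$ is a leaf — or, more carefully, handling the boundary case by the fact that $T^*$ strictly exceeds the MST weight (Lemma~\ref{lem:Ch1}) together with the extra connecting edges $T^*$ must contain beyond the double subpath edges.
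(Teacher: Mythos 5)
Your main chain of inequalities is exactly the paper's proof of Lemma \ref{lem:33}: the weight added in Step 2 is at most $\sum_{i=1}^n w(m_i,e_i) = \tfrac{1}{2}\sum_{i=1}^n\bigl(w(s_i,m_i)+w(m_i,e_i)\bigr)$ (using $w(s_i,m_i)=w(m_i,e_i)$, preserved by IETI), and since $T^*$ contains no infinity edge it must use both double subpath edges of every subpath, so this sum is bounded by $W(T^*)$. The only place you diverge is the strictness discussion, and there your first argument is wrong: a minimum spanning tree of $G''$ can perfectly well contain both $s_im_i$ and $m_ie_i$ for every $i$ (e.g.\ when all double subpath edges are cheaper than all connecting edges, Kruskal adds all $2n$ of them without creating a cycle), and in any case that situation is harmless, since then Step 2 adds nothing and the bound is trivial. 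The case that actually needs care is when every $m_i$ is a leaf of the MST, and there the correct fix is the one you offer only as a fallback and the one the paper uses: $T^*$ is a Hamiltonian tour on $3n$ nodes, so besides the $2n$ double subpath edges it contains $n$ further connecting edges, which gives the strict inequality $W(T^*) > \sum_{i=1}^n\bigl(w(s_i,m_i)+w(m_i,e_i)\bigr)$ and hence the strict bound claimed in the lemma.
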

\begin{proof}
%
%
%
During the second step, for each middle leaf node $m_i$ (for some $i$) in MST,  $w(e_i,m_i)$ is added to the weight of MST. Therefore, $W(E_2 )\leq\sum_{i=1}^n w(e_i,m_i)=\frac{1}{2}\sum_{i=1}^n w(s_i,m_i)+w(e_i,m_i)$, where $W(E_2)$ is the total weights of added edges to MST during this step.
$T^*$ does not include any infinity weight edge. Thus, for each subpath, say $i^\text{th}$, $T^*$ includes the edges $s_im_i$ and $e_im_i$, which implies $W(T^* )>\sum_{i=1}^n w(s_i,m_i)+w(e_i,m_i)$.  Therefore, the inequality  $W(E_2 ) < 0.5 W(T^* )$ holds. The lemma is proven.
\end{proof}

\begin{lemma} $PM^*$'s weight  is less than half of $T^*$'s. \qed \label{lem:Ch2}
\end{lemma}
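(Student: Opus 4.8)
The plan is to mimic the classical argument from Christofides' analysis that bounds the weight of a minimum perfect matching on an even-sized set of vertices by half the weight of an optimal Hamiltonian tour, but being careful about the fact that $G''$ is not fully metric. Let $O$ denote the set of odd-degree vertices of $G^*$; by Lemma~\ref{lemma:4}'s surrounding discussion this set contains no middle node, so every vertex of $O$ is some $s_i$ or $e_i$, and $|O|$ is even. I would first observe that $T^*$ is a finite Hamiltonian tour on all of $V(G'')$, hence in particular it induces a cyclic order on the vertices of $O$. Walking around $T^*$ and contracting the maximal sub-paths between consecutive elements of $O$ yields a Hamiltonian cycle $C$ on the vertex set $O$ whose edges correspond to vertex-disjoint paths of $T^*$.

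The key step is to show $W(C) \le W(T^*)$. This is where the metric property is normally invoked: each edge $xy$ of $C$ has weight at most the length of the corresponding $T^*$-subpath, by repeated application of the triangle inequality. In $G''$ the triangle inequality can fail, but only for the two special families \vio{2} (triangles with exactly one infinity edge) — and crucially, since $T^*$ is \emph{finite}, none of the $T^*$-subpaths passes through a middle node except at the very ends if an endpoint of the subpath were a middle node, which cannot happen because $O$ contains no middle nodes. So every intermediate vertex on each contracted subpath is a non-middle node, and I would need to check that the triangles encountered in the telescoping shortcut argument ($\bigtriangleup s_i e_i v$ type, category \vio{1}) have been repaired by IETI (Theorem~\ref{th:2}). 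Thus on the subgraph of $G''$ spanned by non-middle vertices the triangle inequality holds everywhere, and the standard shortcutting estimate gives $W(C)\le W(T^*)$.

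Next, $C$ is a cycle on an even number of vertices, so its edge set decomposes into two perfect matchings $M_1, M_2$ on $O$ (take alternating edges). Since $PM^*$ is a \emph{minimum} perfect matching on $O$ in $G''$, we have $W(PM^*)\le W(M_1)$ and $W(PM^*)\le W(M_2)$, hence
\begin{equation}
2\,W(PM^*) \;\le\; W(M_1)+W(M_2) \;=\; W(C) \;\le\; W(T^*),
\end{equation}
which yields $W(PM^*) < \tfrac12 W(T^*)$ (strict, since $T^*$ also uses the double-subpath edges not accounted for in $C$, exactly as in Lemma~\ref{lem:33}).

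The main obstacle I anticipate is the shortcutting step $W(C)\le W(T^*)$: one must argue carefully that the only triangle-inequality violations in $G''$ involve infinity edges at middle nodes, that the finiteness of $T^*$ forces each contracted arc to avoid middle nodes internally, and that $O$ itself excludes middle nodes — so that the violating triangles of type \vio{2} never interfere with the telescoping. Once that structural observation is nailed down, the rest is the textbook two-matchings argument and is routine. I would also note that this is precisely the place where the "imperfect" establishment of the triangle inequality by IETI is good enough: we never need the triangle inequality on triangles touching a middle node, because no such triangle arises in the matching bound.
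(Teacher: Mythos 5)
Your overall plan -- shortcut $T^*$ to a cycle $C$ on the odd-degree set $O$ of $G^*$, split $C$ into two alternating perfect matchings of $O$, and invoke minimality of $PM^*$ -- is the standard Christofides matching bound, and its endgame coincides with the paper's own (appendix) proof, which simply allocates the edges of $T^*$ alternately to two matchings. However, your justification of the crucial step $W(C)\le W(T^*)$ has a genuine error: you claim that, because $T^*$ is finite, the contracted $T^*$-subpaths between consecutive vertices of $O$ contain no middle nodes in their interior. That is false. $T^*$ is a Hamiltonian tour of $G''$, so it visits every middle node $m_i$; and since every middle node has degree $2$ in $G^*$, no middle node lies in $O$. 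Hence every $m_i$ sits in the \emph{interior} of some contracted arc, and the telescoping as you describe it does pass through vertices where triangles with infinity edges could, in principle, arise -- exactly the situation you were trying to rule out.

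The step can be repaired, and the repair is where finiteness of $T^*$ is actually used: the only finite edges at $m_i$ are $s_im_i$ and $m_ie_i$, so $T^*$ traverses $m_i$ flanked by $s_i$ and $e_i$, and the triangle $\bigtriangleup s_im_ie_i$ contains no infinity edge; moreover, after IETI one still has $w(s_i,m_i)+w(m_i,e_i)=w(s_i,e_i)$ (equation (2) removes $|\dv{i}|$ from the subpath edge and $|\dv{i}|/2$ from each half), so splicing out all middle nodes first costs nothing and leaves arcs whose internal vertices are all non-middle. Every triangle used in the remaining telescoping then has three finite edges and satisfies the \condition\ by Theorem \ref{th:2}, giving $W(C)\le W(T^*)$ and hence $2W(PM^*)\le W(M_1)+W(M_2)=W(C)\le W(T^*)$. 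With that correction your argument is complete -- indeed more complete than the paper's, which omits the shortcut-to-$O$ step altogether and compares $PM^*$ directly with an alternating matching of $T^*$ on all vertices. Your strictness remark is both unnecessary and not established by your argument as written; the paper itself only proves $W(PM^*)\le \tfrac{1}{2}W(T^*)$, which is all the ratio-bound theorem needs.
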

\begin{theorem}
The ratio bound of CSPP is 2. \qed
\end{theorem}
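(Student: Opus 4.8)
The plan is to follow the structure of the classical Christofides analysis, assembling the bounds already established in Lemmas \ref{lem:Ch1}, \ref{lem:33}, and \ref{lem:Ch2}, and then accounting for the cost incurred by replacing ordinary shortcuts with confined shortcuts. Let $W(\text{MST})$ denote the weight of the minimum spanning tree from Step 1, $W(E_2)$ the total weight added in Step 2, $W(PM^*)$ the weight of the minimum perfect matching added in Step 3, and recall that $\hat{G}$ is the Eulerian multigraph obtained after Step 3. The weight of the Eulerian trail produced in Step 4 equals $W(\hat G) = W(\text{MST}) + W(E_2) + W(PM^*)$, since Steps 1--3 only add edges and Step 4 traverses each edge exactly once.

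First I would argue that Step 5 (confined shortcut) does not increase the total weight of the tour. Each elementary confined shortcut replaces a subsequence $w$-$v$-$u$ by the single edge $wu$, and by Lemma \ref{lemma:4} and Lemma \ref{lem:new} we only ever perform this when $w$ and $u$ are not both forced to create an infinity edge — indeed, whenever it is actually performed, the three nodes $w,v,u$ all lie in $G''$ on a triangle that is \emph{not} in \vio{2} (none of $wu$, $wv$, $vu$ is an infinity edge, since a problematic occurrence is skipped), so by Theorem \ref{th:2} the triangle inequality $w(w,u) \le w(w,v) + w(v,u)$ holds there. Hence each confined shortcut step does not increase the weight, and therefore $W(\text{h-trail}) \le W(\hat G)$. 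This is the step I expect to be the main obstacle: one must be careful that the triangle used at each confined-shortcut step genuinely satisfies the triangle inequality — i.e. that it is never one of the \vio{2} triangles — and that Lemma \ref{lem:new} guarantees the whole Step 5 terminates with a finite Hamiltonian tour even in the chained case $x$-$p_0$-$\cdots$-$p_m$-$y$.

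Then I would combine the pieces. By Lemma \ref{lem:Ch1}, $W(\text{MST}) < W(T^*)$; by Lemma \ref{lem:33}, $W(E_2) < \tfrac12 W(T^*)$; by Lemma \ref{lem:Ch2}, $W(PM^*) < \tfrac12 W(T^*)$. Adding these,
\begin{equation}
W(\text{h-trail}) \le W(\text{MST}) + W(E_2) + W(PM^*) < W(T^*) + \tfrac12 W(T^*) + \tfrac12 W(T^*) = 2\,W(T^*).
\end{equation}
Since, by Theorems \ref{lemma:1} and \ref{th:1}, the optimal TSP tour $T^*$ in $G''$ has weight equal to the optimal SPP solution, and h-trail is a finite Hamiltonian tour of $G''$ and hence a feasible SPP solution, the output of CSPP has weight at most $2$ times the optimum. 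Therefore the ratio bound of CSPP is $2$, as claimed. $\qed$
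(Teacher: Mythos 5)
Your proposal is correct and follows essentially the same route as the paper: bound $W(\hat{G}) = W(\text{MST}) + W(E_2) + W(PM^*)$ by $2W(T^*)$ via Lemmas \ref{lem:Ch1}, \ref{lem:33}, and \ref{lem:Ch2}, then argue via Lemma \ref{lemma:4} (and Theorem \ref{th:2}) that every confined shortcut uses a triangle free of infinity edges, so the triangle inequality holds and Step 5 does not increase the tour length, giving $W(\text{h-trail}) \leq W(\hat{G}) \leq 2W(T^*)$. Your added remarks on termination via Lemma \ref{lem:new} and the link back to the SPP optimum through Theorems \ref{th:1} and \ref{lemma:1} are consistent with, and slightly more explicit than, the paper's own argument.
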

\begin{proof}
The graph $\hat{G}$ is composed of the edges of the MST and the edges added in steps 2 and 3  (perfect matching). According to Lemmas \ref{lem:Ch1}, \ref{lem:33} and \ref{lem:Ch2},
\begin{equation}
\begin{split}
W(\hat{G}) & = W(\text{MST})+W(E_2)+W(PM^* ) \\ 
 &  \leq 2W(T^*)
\end{split}
\end{equation}
, where $\hat{G}$ denotes the graph generated in step 3 of CSPP, MST denotes the output tree of step 1, $E_2$ denotes the set of edges which are added to MST in step 2, and $PM^*$ denotes edges of perfect matching in step 3. Thus, the length of {\em trail} over $\hat{G}$ (found in step 4) is less than 2 times of the optimal TSP tour. 

$trail$ does not contain any infinity weight edges, since the edges of $\hat{G}$ are not infinity weight edges. Moreover, based on Lemma \ref{lemma:4}, performing shortcuts during step 5 does not add any infinity edge. Accordingly, the triangles used in performing confined shortcut (for each confined shortcut two edges of tour corresponding to a triangle are replaced with the third edge of triangle) do not include any infinity edge. Thus, based on Theorem \ref{th:2}, \condition\ is not violated in these triangles. Thus, for each execution of confined shortcuts (during step 5), the length of the tour decreases. Hence, the length of the Hamiltonian tour, {\em h-trail}, returned by CSPP is less than the length of {\em trail} and the following statement holds:
\begin{equation}
W(\text{\em h-trail})\leq W(\text{\em trail})=W(\hat{G})\leq 2W(T^*)
\end{equation}
Hence, the solution produced by CSPP is within 2 of optimum. Therefore,  the ratio bound of CSPP is 2. 
\end{proof}
\subsection{Pseudo Code and Complexity Analysis of CSPP} \label{sub:CSPPcomplexity}
In this section, the time complexity of the approximation algorithm CSPP is discussed in term of the number of subpaths. Algorithm 3 is a pseudo code for this algorithm, which takes $G''$ (the output of the IETI algorithm) as input and returns a Hamiltonian tour called {\em h-trail}. Let the number of subpaths in the given workspace be $n$. Then the graph $G''$ has $3n$ nodes and $9n^2$ edges. 

\begin{algorithm}
\begin{algorithmic}[1]
\caption{CSPP}
\STATE Find the minimum spanning tree, MST
\FORALL {subpaths such as $i$}
\IF{$m_i$ is a leaf node and $s_i$ is the parent of $m_i$}
\STATE Add edge $m_ie_i$
\ENDIF
\IF{$m_i$ is a leaf node and $e_i$ is the parent of $m_i$}
\STATE Add edge $m_is_i$
\ENDIF
\ENDFOR
\STATE Do perfect matching over odd-degree nodes in $G^*$
\STATE Add perfect matching edges to $G\frac{¥}{¥}^*$ and construct $\hat{G}$
\STATE Find Eulerian tour, $trail$, in $\hat{G}$ 
\STATE{{\em h-trail} $\leftarrow$ {\em trail}}
\STATE Make $visit$ zero for all nodes in $G''$
\WHILE{has not reached the end of {\em h-trail}}
\STATE{select the next node in $t$ in forward order}
\STATE{$visit(t) \leftarrow visit(t)+1$}
\IF{$visit(t) \geq$ 2}
\STATE {$t_0$ $\leftarrow$ the node appears before $t$ in {\em h-trail}}
\STATE {$t_1$ $\leftarrow$ the node appears after $t$ in {\em h-trail}}
\IF{$w(t_0,t_1)$ is finite}
\STATE{Delete node $t$}
\STATE{$visit(t) \leftarrow visit(t) - 1$}
\ENDIF
\ENDIF
\ENDWHILE
\WHILE{has not reached the end of {\em h-trail}}
\STATE{Select the next node $t$ in {\em h-trail} forward order}
\IF{$visit(t) ==$ 2}
\STATE{Delete node $t$}
\STATE{$visit(t) \leftarrow visit(t)-1$}
\ENDIF
\ENDWHILE
\RETURN {\em h-trail}
\label{alg:2}
\end{algorithmic}
\end{algorithm}

{\bf Complexity analysis of Step 1 (line 1):}
MST can be found using Kruskal's algorithm \cite{kruskal_shortest_1956} or  Prim's algorithm \cite{prim_shortest_1957}. Two implementations of these algorithms are {\it Improved Implementation of Kruskal Algorithm} and  {\it Fibonacci Heap Implementation of Prim`s Algorithm} \cite{ahujanetwork}, respectively, which both are in $O(|E| + |V|\log |V|)$ complexity class. Since the numbers of nodes and edges are $3n$ and $9n^2$, respectively, the total complexity of this step would be in $O(n^2)$.

{\bf Complexity analysis of Step 2 (lines 2 to 9):}
This step includes a loop of $n$ iterations, where each iteration takes $O(1)$ time. Thus, the total complexity of this step is in $O(n)$.

{\bf Complexity analysis of Step 3 (lines 10 and 11):}
The modified version of Edmond's Blossom Shrinking algorithm in \cite{micali_o_1980} can be used to do perfect matching, which requires $O(|V|^3)$ running time. The total number of odd-degree nodes in $G^*$ is in the $O(n)$ class. Thus, complexity of perfect matching would be in the $O(n^3)$ class. Furthermore, the number of nodes involved in perfect matching is in $O(n)$. Therefore, adding the perfect matching edges to $G^*$ (line 11) is in $O(n)$. As a result, step 3 requires $O(n^3)$ running time.

{\bf Complexity of Step 4 (line 12):}
One can use Fleury algorithm \cite{pemmaraju_computational_2003} to find Eulerian tour, which has the running time complexity of $O(|E|)$. $\hat{G}$ includes the following kinds of edges: 1) the edges of MST ($O(n)$ edges) 2) the edges between $m_i s_i$ or $m_i e_i$,  which are added in step 2 ($O(n)$ edges) 3) the edges of perfect matching ($O(n)$ edges). Thus, the total number of the edges in $\hat{G}$ is in the $O(n)$ class. This implies that the running time of Fleury algorithm on $\hat{G}$ is in the $O(n)$ class.

{\bf Complexity analysis of Step 5 (lines 13 to 34):}
{\em trail} in line 12 is the output of step 4. In line 13, {\em trail} is copied into {\em h-trail}. Hence, according to Lemma \ref{lemma:4}, confined shortcut (step 5) can be applied on {\em h-trail} in order to produce a finite Hamiltonian tour.  The procedure of confined shortcut is shown in lines 14 to 33. In order to perform confined shortcut, {\em h-trail} is explored two times from head to tail (lines 15 to 26 and lines 27 to 33). If during an exploration (moving along all members of {\em h-trail}) performing a confined shortcut leads to adding an edge with infinity weight, that particular shortcut will not be performed.

One can think of {\it h-trail} as a list of nodes. Lines 16 to 25 iterates over all members of this list. At each iteration, $visit(t)$ shows the number of occurrences of node $t$ in {\em h-trail} up to the current index of the list.  If the number of occurrences of a node is more than 1 (line 18), its current occurrence is removed (line 22) provided that performing confined shortcut ($t_0$-$t$-$t_1$) does not add an infinity edge (line 21). As a result, only the first occurrence of a node is kept and the rest are removed. According to Lemma \ref{lemma:4}, two scenarios can happen for a node $t$ in {\em h-trail}: 1) performing confined shortcut over one of its occurrences adds an infinity edge, 2)  performing confined shortcut over each occurrence never adds an infinity edge. Therefore, after executing the loop in lines 15 to 26, one of the three following cases can occur for a node $t$:

\begin{enumerate}
\item Performing confined shortcut over each occurrence of $t$  does not add an infinity edge. In this case, only the first occurrence of $t$ remains in {\em h-trail} and the others are removed. This  makes the value of $visit(t)$ equal to $1$.

\item Performing confined shortcut over the first occurrence of $t$ adds an infinity edge. Similar to the previous case, only the first occurrence is kept in the list and the others are removed. Therefore, the value of $visit(t)$ would be 1 after executing the loop.

\item Performing confined shortcut over one of the $t$'s occurrences (not the first one) adds an infinity edge. In this case, two occurrences are kept and the others are removed. The kept occurrences are the first one and the occurrence that results in adding an infinity edge. In this case, the value of $visit(t)$ would be  2.
\end{enumerate}

As already mentioned, to turn an Eulerian tour to a Hamiltonian one, the occurrences of each node must be reduced to one. To this end, the execution of lines 27 to 33 reduces the value of $visit$ for those nodes that are visited more than once (nodes in the case 3 above).

Performing confined shortcut on the second occurrence of nodes belonging to the case 3 leads to adding an infinity edge. Therefore, confined shortcut must be performed over the first occurrence of nodes. Lines 28 to 32 iterate over each element of {\em h-trail}. Line 29 detects the first occurrence of nodes belonging to the case 3. Then, confined shortcut is performed on the first occurrence and the number of visits is reduced to one (line 30).

At the end of the first exploring, the number of occurrences of each node can be up to two, while at the end of the second exploring, this number is exactly one. These explorations can prevent us from moving constantly back and forth over {\em h-trail}.

The time of exploring {\em h-trail} is linearly dependent to the number of its edges, which is equal to the number of $\hat{G}$'s edges (the number of edges in $\hat{G}$ is in  the $O(n)$ class). As a consequence, the complexity of each exploration is in $O(n)$ and, hence, the overall complexity of this step would be in $O(n)$.

According to the complexity class of each step (discussed above), the total complexity of CSPP would be in  the  $O(n^3)$ class.

\section{Experiments and Results}
\label{experiments}
In this section, we empirically gauge the performance of CSPP by comparing it with the method proposed by Gyorfi {\it et al.} \cite{gyorfi_evolutionary_2010}, which uses GA. For simplicity, throughout this section, this method is referred by Gyorfi\_GA. For comparison, we use different sets of workspaces including 3 workspaces with 20 subpaths, 3 workspaces with 50 subpaths and 3 workspaces with 80 subpaths. The subpaths of each workspace were built randomly in different lengths and different locations. 
Any of the 9 randomly generated workspaces can be seen as a sample for a real work application. For instance, they can be a scratch on a surface that a robot should smooth.


In the Gyorfi\_GA method, each feasible solution of the problem is shown with a fixed length augmented chromosome with length of $n$. Each augmented chromosome such as $e$, consists two chromosomes such as $c$ and $d$ shown in Fig. \ref{fig:16}. Each gene of chromosome $d$ can take value 0 or 1, which indicates the connection between two subpath. For any $i$, $d(i) = 0$  shows that the head of the subpath $c(i)$ is connected to the subpath $c(i+1)$. Similarly,  $d(i)=1$ shows that the tail of $c(i)$ is connected to $c(i+1)$. Five different genetic operations are used in the Gyorfi\_GA method. ``The crossover operator produces an offspring chromosome by combining genes from two parent chromosomes. The inversion operator changes a region of a parent chromosome by inverting the order of the genes in the region. The rotation operator changes a region of a parent chromosome by rotating the genes in the region in manner similar to a circular shift register. The mutation operator exchanges two genes in a parent chromosome. The subpath reversal operator changes the direction flag of randomly chosen genes within a parent chromosome''\cite{gyorfi_evolutionary_2010}.

\begin{figure}
\centering
\includegraphics[width=0.5\textwidth]{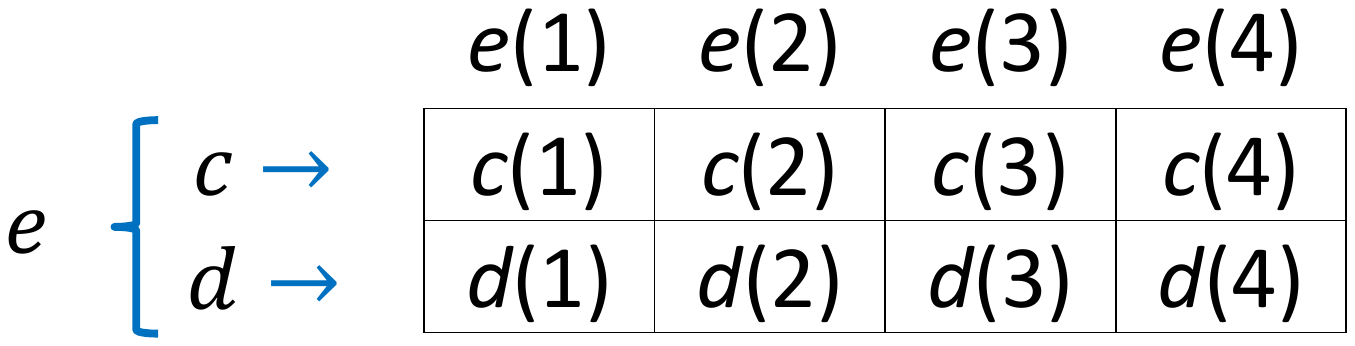}
\caption{An augmented chromosome used in Gyorfi\_GA which consists of two arrays (adapted from \cite{gyorfi_evolutionary_2010}).}
\label{fig:16}
\end{figure}

Parameter setting for the operation rates of Gyorfi\_GA were chosen to be 0.5 for crossover, 0.25 for inversion, 0.25 for rotation, 0.5 for mutation and 0.5 for subpath reversal. These parameters are fixed during all experiments and seem to be near optimal set of parameters, according to the results of different parameter settings. Each of the methods were executed 30 times over each environment.

Table \ref{table:1} compares means and standard deviations (in parenthesis) of 30 executions of CSPP and Gyorfi\_GA (population size is 100) in three environments with 20 supaths. Gyorfi\_GA iterates until it converges (all of the executions converged at 150 iteratios at most).  The results indicate that Gyorfi\_GA returns longer outputs with more time consumption (in average CSPP is more than 8.55 times efficient than Gyorfi\_GA in terms of time consumption).

\begin{table}[!b]
\centering
\caption{Results of CSPP and Gyorfi\_GA with population size 100, in three workspaces with 20 subpaths.}
\label{table:1}
\begin{tabular}{ l l l  l l}
  \hline
  \multirow{2}{*}{Env.} & \multicolumn{2}{c}{CSPP} & \multicolumn{2}{c}{Gyorfi\_GA} \\ \cline{2-5}
    & Length & Time & Length & Time (in seconds)
    \\ \cline{1-5}
  1st  & 1134.6(0)   & 0.6(0) & 1147.9(31.7)     & 5.6(0.2) \\
  2nd  & 1375.8(0)     & 1(0)   & 1357.6(34.5)     & 3.9(0.1) \\
  3rd  & 1394.8(0) & 0.2(0) & 1410.8(30.3)     & 5.9(0.1)
\end{tabular}
\end{table}

\begin{table}[!b]
\centering
\caption{Results of CSPP and Gyorfi\_GA with population size 200, in three workspaces with 50 subpaths.}
\label{table:2}
\begin{tabular}{ l l l  l l}
  \hline
  \multirow{2}{*}{Env.} & \multicolumn{2}{c}{CSPP} & \multicolumn{2}{c}{Gyorfi\_GA} \\ \cline{2-5}
    & Length & Time & Length & Time  (in seconds)
    \\ \cline{1-5}
  1st  & 2889.8(0) & 6.0(0) & 3052.9(49.3) & 65.1(3.1) \\
  2nd  & 3004.5(0) & 6.3(0) & 3110.7(45)     & 64.2(5.1) \\
  3rd  & 2710.1(0) & 12.7(0) &2843.0(54)     & 52.1(3)
\end{tabular}
\end{table}

\begin{table}[]
\centering
\caption{Results of CSPP and Gyorfi\_GA with population size 300, in three workspaces with 80 subpaths.}
\label{table:3}
\begin{tabular}{ l l l  l l}
  \hline
  \multirow{2}{*}{Env.} & \multicolumn{2}{c}{CSPP} & \multicolumn{2}{c}{Gyorfi\_GA} \\ \cline{2-5}
    & Length & Time & Length & Time  (in seconds) \\ \cline{1-5}
  $1$st  & $4536.2(0)$   & $74.8(0)$ & $4791.4(71.5)$   & $168.2(19.7)$ \\
  $2$nd  & $4254.2(0)$ & $16.7(0)$ & $4610.2(81.3)$     & $147.8(2.8)$ \\
  $3$rd  & $4732.3(0)$ & $65.5(0)$ & $5003.5(78.1)$     & $149.7(3.3)$
\end{tabular}
\end{table}

Likewise, both methods were executed in the workspaces with 50 and 80 subpaths over 30 iteratios the results of which are shown in Tables \ref{table:2} and \ref{table:3}. The population sizes for these experiments are 200 and 300, respectively. Roughly, the Gyorfi\_GA converges after 300 and 500 iteratios for workspaces with 50 and 80 subpaths. According to Tables \ref{table:2} and \ref{table:3}, Gyorfi\_GA leads to less efficient results both in terms of length of output and time. Technically speaking, the time that the Gyorfi\_GA method spends is more than 7.28 and 2.97 times than CSPP uses to generate the final results.

According to Table \ref{table:4}, using CSPP in place of Gyorfi\_GA provides an average improvement of $ 86.7 \%$ and  $0.33\%$ in execution time and result's length, respectively. Note that these improvements happen while the average rate of deviation in both execution time and result's length in Gyorfi\_GA ($0.13$ and $32.2$, respectively) are reduced to $0$ in CSPP. 

Similarly, according to Tables \ref{table:5} and \ref{table:6},  CSPP improves the average execution time $85.5\%$ and $66.8\%$ in workspaces with 50 and 80 subpaths, respectively. Also, result's length is improved $4.43\%$ and $6.13\%$ in average, respectively.  The average rate of deviation in both execution time and result's length in Gyorfi\_GA are reduced to 0 in CSPP. Note that  the average rate of deviation in time and length in Gyorfi\_GA are, respectively, 3.73 and 49.4 in workspaces with 50 subpaths and 8.6, 77 in workspaces with 80 subpaths.

Comparing the outputs of the two algorithm, we get the following results: 
\begin{enumerate}
\item Increasing the number of subpaths causes a pretty huge increase in the rate of deviation in both execution time and result's length in Gyorfi\_GA. This implies that uncertainty in Gyorfi\_GA increases by increasing the number of subpaths in the same environment, whereas this is not the case in CSPP.  Since the rate of deviation is 0 for both execution time and result's length in CSPP, it always delivers a certain result in a certain time for a given input.     
\item As we see in the third columns of the Tables \ref{table:4}, \ref{table:5}, and \ref{table:6}, the CSPP algorithm delivers better results than Gyorfi\_GA does.  Also, it is more intriguing that  the efficiency of CSPP remarkably increases in comparison with Gyorfi\_GA by increasing the number of subpaths: the average length improvement increases from 0.33\% to 6.13\% by increasing the number of subpaths from 20 to 80.   
\item The CSPP algorithm is much more faster than the Gyorfi\_GA algorithm to solve SPP in all above experiments. 
\end{enumerate}

\begin{table}[]
\centering
\caption{Result Improving by CSPP in comparison with Gyorfi\_GA in environments with 20 subpaths.}
\label{table:4}
\begin{tabular}{ccc}
\hline 
Env. & Time Improving & Length Improving\tabularnewline
\hline 
I & 89.3\% & 1\%\tabularnewline
II & 74.4\% & -1\%\tabularnewline
III & 96.6\% & 1\%\tabularnewline
\hline 
Average & 86.7\% & 0.33\%\tabularnewline
\hline 
\end{tabular}
\end{table}

\begin{table}
\centering
\caption{Result Improving by CSPP in comparison with Gyorfi\_GA in environments with 50 subpaths.}
\label{table:5}
\begin{tabular}{ccc}
\hline 
Env. & Time Improving & Length Improving\tabularnewline
\hline 
I & 90.7\% & 5.3\%\tabularnewline
II & 90.2\% & 3.4\%\tabularnewline
III & 75.6\% & 4.6\%\tabularnewline
\hline 
Average & 85.5\% & 4.43\%\tabularnewline
\hline 
\end{tabular}
\end{table}

\begin{table}
\centering
\caption{Result Improving by CSPP in comparison with Gyorfi\_GA in environments with 80 subpaths.}
\label{table:6}
\begin{tabular}{ccc}
\hline 
Env. & Time Improving & Length Improving\tabularnewline
\hline 
I & 55.5\% & 5.3\%\tabularnewline
II & 88.7\% & 7.7\%\tabularnewline
III & 56.2\% & 5.4\%\tabularnewline
\hline 
Average & 66.8\% & 6.13\%\tabularnewline
\hline 
\end{tabular}
\end{table}

\section{Conclusion and Future Works}
\label{future}
In this paper, we have proposed a method to transform SPP to TSP. Transforming SPP to TSP allows us to use existing algorithms for solving TSP in the subpath planning context. However, violation of \condition\ hinders applying existing fixed-ratio bound approximation algorithms such as Christofides for SPP. To address this problem, we have proposed the algorithm IETI, which makes a main subset of violating triangles to satisfy the \condition\ condition. The IETI algorithm should be seen as a fundamental step in proposing and applying fixed-ratio bound approximation algorithms for solving SPP. 
Using this method, a fixed-ratio bound approximation algorithm, called CSPP, has been proposed  to find a near optimal solution for SPP. CSPP Solves TSP over the output graph of IETI algorithm.  CSPP is similar to the Christofides' algorithm regarding time complexity, but its ratio bound (the fixed ratio bound of 2) is more than the Christofides' algorithm. This is natural, since CSPP can be executed over some graphs, which cannot be in the domain of  Christofides' algorithm.
 

Although we theoretically showed that CSPP has reasonable time complexity with a small fixed ratio bound, experiments also indicate that the algorithm is a fast algorithm with more efficient results in comparison with a state-of-the-art method Gyorfi\_GA. Moreover, the differences between efficiency of these algorithms becomes much more significant as the number of subpaths increases. 

We think that it would be possible to improve the results of the CSPP method using some  improvement methods such as Lin-Kernighan \cite{lin_effective_1973} and Helsgaun \cite{helsgaun_effective_2000}.

We plan to propose some algorithms with smaller ratio bounds. Indeed, we try to improve the ratio bound of the CSPP algorithm using some heuristic-based methods. One important guide line could be using the graph generated by the IETI algorithm as an input for a modified version of the RPP algorithm \cite{frederickson1979approximation}. 

Our algorithms solve SPP without considering any constraints, say some priorities over subpaths or environments with obstacles. We also plan to solve SPP for some certain applications with some constraints.

CSPP is neither a meta-heuristic nor a stochastic method. It may be possible to propose some meta-heuristic based  methods such as tabu search and simulated annealing based on the TSP model of a given SPP to generate more efficient results in the case of offline tasks. We can initialize such algorithms with the solutions found by CSPP  to improve the output of such meta-heuristic methods.

\newpage
\bibliographystyle{abbrv}
\bibliography{ref}

\begin{thebibliography}{10}

\bibitem{ahujanetwork}
R.~Ahuja, T.~Magnanti, and J.~Orlin.
\newblock Network flows: Theory, algorithms, and applications, 1993.

\bibitem{arkin1994approximation}
E.~M. Arkin and R.~Hassin.
\newblock Approximation algorithms for the geometric covering salesman problem.
\newblock {\em Discrete Applied Mathematics}, 55(3):197--218, 1994.

\bibitem{ausiello1999complexity}
G.~Ausiello.
\newblock {\em Complexity and approximation: Combinatorial optimization
  problems and their approximability properties}.
\newblock Springer Verlag, 1999.

\bibitem{blum2003metaheuristics}
C.~Blum and A.~Roli.
\newblock Metaheuristics in combinatorial optimization: Overview and conceptual
  comparison.
\newblock {\em ACM Computing Surveys (CSUR)}, 35(3):268--308, 2003.

\bibitem{chen_coupled_2011}
D.~Chen, S.~Luan, and C.~Wang.
\newblock Coupled path planning, region optimization, and applications in
  intensity-modulated radiation therapy.
\newblock {\em Algorithmica}, 60(1):152---174, 2011.

\bibitem{Christofides_worst-case_1976}
N.~Christofides.
\newblock {Worst-Case} analysis of a new heuristic for the travelling salesman
  problem.
\newblock Technical report, {DTIC} Document, 1976.

\bibitem{coja2006heuristic}
A.~Coja-Oghlan, S.~O. Krumke, and T.~Nierhoff.
\newblock A heuristic for the stacker crane problem on trees which is almost
  surely exact.
\newblock {\em Journal of Algorithms}, 61(1):1--19, 2006.

\bibitem{dumitrescu2001approximation}
A.~Dumitrescu and J.~S. Mitchell.
\newblock Approximation algorithms for tsp with neighborhoods in the plane.
\newblock In {\em Proceedings of the twelfth annual ACM-SIAM symposium on
  Discrete algorithms}, pages 38--46. Society for Industrial and Applied
  Mathematics, 2001.

\bibitem{eiselt1995arc1}
H.~A. Eiselt, M.~Gendreau, and G.~Laporte.
\newblock Arc routing problems, part i: The chinese postman problem.
\newblock {\em Operations Research}, 43(2):231--242, 1995.

\bibitem{eiselt1995arc2}
H.~A. Eiselt, M.~Gendreau, and G.~Laporte.
\newblock Arc routing problems, part ii: The rural postman problem.
\newblock {\em Operations Research}, 43(3):399--414, 1995.

\bibitem{frederickson1979approximation}
G.~N. Frederickson.
\newblock Approximation algorithms for some postman problems.
\newblock {\em Journal of the ACM (JACM)}, 26(3):538--554, 1979.

\bibitem{frederickson1976approximation}
G.~N. Frederickson, M.~S. Hecht, and C.~E. Kim.
\newblock Approximation algorithms for some routing problems.
\newblock In {\em Foundations of Computer Science, 1976., 17th Annual Symposium
  on}, pages 216--227. IEEE, 1976.

\bibitem{gareycomputers}
M.~Garey and D.~Johnson.
\newblock Computers and intractability. a guide to the theory of
  np-completeness. a series of books in the mathematical sciences, ed. v. klee.
  1979.

\bibitem{geraerts_corridor_2007}
R.~Geraerts and M.~Overmars.
\newblock The corridor map method: a general framework for real-time
  high-quality path planning.
\newblock {\em Computer Animation and Virtual Worlds}, 18(2):107---119, 2007.

\bibitem{goldberg1989genetic}
D.~Goldberg.
\newblock {\em Genetic algorithms in search, optimization, and machine
  learning}.
\newblock Addison-wesley, 1989.

\bibitem{gyorfi_evolutionary_2010}
J.~Gyorfi, D.~Gamota, S.~Mok, J.~Szczech, M.~Toloo, and J.~Zhang.
\newblock Evolutionary path planning with subpath constraints.
\newblock {\em {IEEE} Transactions on Electronics Packaging Manufacturing},
  33(2):143---151, 2010.

\bibitem{helsgaun_effective_2000}
K.~Helsgaun.
\newblock An effective implementation of the {Lin-Kernighan} traveling salesman
  heuristic.
\newblock {\em European Journal of Operational Research}, 126(1):106---130,
  2000.

\bibitem{johnson_traveling_1997}
D.~Johnson and L.~{McGeoch}.
\newblock The traveling salesman problem: A case study in local optimization.
\newblock {\em Local search in combinatorial optimization}, pages 215---310,
  1997.

\bibitem{karp_reducibility_1972}
R.~Karp.
\newblock {\em Reducibility among combinatorial problems}, volume~40.
\newblock Springer, 1972.

\bibitem{kruskal_shortest_1956}
J.~Kruskal.
\newblock On the shortest spanning subtree of a graph and the traveling
  salesman problem.
\newblock {\em Proceedings of the American Mathematical society}, 7(1):48---50,
  1956.

\bibitem{laporte_traveling_1992}
G.~Laporte.
\newblock The traveling salesman problem: An overview of exact and approximate
  algorithms.
\newblock {\em European Journal of Operational Research}, 59(2):231---247,
  1992.

\bibitem{lenstra1981complexity}
J.~K. Lenstra and A.~Kan.
\newblock Complexity of vehicle routing and scheduling problems.
\newblock {\em Networks}, 11(2):221--227, 1981.

\bibitem{lin_effective_1973}
S.~Lin and B.~Kernighan.
\newblock An effective heuristic algorithm for the traveling-salesman problem.
\newblock {\em Operations research}, pages 498---516, 1973.

\bibitem{mata1995approximation}
C.~S. Mata and J.~S. Mitchell.
\newblock Approximation algorithms for geometric tour and network design
  problems.
\newblock In {\em Proceedings of the eleventh annual symposium on Computational
  geometry}, pages 360--369. ACM, 1995.

\bibitem{micali_o_1980}
S.~Micali and V.~Vazirani.
\newblock An o (v{\textbar} v{\textbar} c{\textbar} e{\textbar}) algoithm for
  finding maximum matching in general graphs.
\newblock In {\em Foundations of Computer Science, 1980., 21st Annual Symposium
  on}, pages 17---27, 1980.

\bibitem{michael1979computers}
R.~G. Michael and D.~S. Johnson.
\newblock Computers and intractability: A guide to the theory of
  np-completeness.
\newblock {\em WH Freeman \& Co., San Francisco}, 1979.

\bibitem{orloff1974fundamental}
C.~Orloff.
\newblock A fundamental problem in vehicle routing.
\newblock {\em Networks}, 4(1):35--64, 1974.

\bibitem{papadimitriou1977euclidean}
C.~H. Papadimitriou.
\newblock The euclidean travelling salesman problem is np-complete.
\newblock {\em Theoretical Computer Science}, 4(3):237--244, 1977.

\bibitem{pemmaraju_computational_2003}
S.~Pemmaraju and S.~Skiena.
\newblock {\em Computational discrete mathematics: combinatorics and graph
  theory with Mathematica}.
\newblock Cambridge Univ Pr, 2003.

\bibitem{pepy2009reliable}
R.~Pepy, M.~Kieffer, and E.~Walter.
\newblock Reliable robust path planning with application to mobile robots.
\newblock {\em International Journal of Applied Mathematics and Computer
  Science}, 19(3):413--424, 2009.

\bibitem{prim_shortest_1957}
R.~Prim.
\newblock Shortest connection networks and some generalizations.
\newblock {\em Bell system technical journal}, 36(6):1389---1401, 1957.

\bibitem{safra2006complexity}
S.~Safra and O.~Schwartz.
\newblock On the complexity of approximating tsp with neighborhoods and related
  problems.
\newblock {\em computational complexity}, 14(4):281--307, 2006.

\bibitem{sahni1976}
S.~Sahni and T.~Gonzalez.
\newblock P-complete approximation problems.
\newblock {\em Journal of the ACM (JACM)}, 23(3):555--565, 1976.

\bibitem{sheng2005tool}
W.~Sheng, H.~Chen, N.~Xi, and Y.~Chen.
\newblock Tool path planning for compound surfaces in spray forming processes.
\newblock {\em Automation Science and Engineering, IEEE Transactions on},
  2(3):240--249, 2005.

\bibitem{song_path_2002}
G.~Song, S.~Thomas, K.~{DILL}, J.~Scholtz, and N.~Amato.
\newblock A path planning-based study of protein folding with a case study of
  hairpin formation in protein g and l.
\newblock In {\em Proc. Pacific Symposium of Biocomputing {(PSB)}}, pages
  240---251, 2002.

\bibitem{tapia_motion_2010}
L.~Tapia, S.~Thomas, and N.~Amato.
\newblock A motion planning approach to studying molecular motions.
\newblock {\em Communications in Information \& Systems}, 10(1):53---68, 2010.

\bibitem{tong-ying_research_2004}
G.~Tong-ying, Q.~Dao-kui, and D.~Zai-li.
\newblock Research of path planning for polishing robot based on improved
  genetic algorithm.
\newblock In {\em Robotics and Biomimetics, 2004. {ROBIO} 2004. {IEEE}
  International Conference on}, pages 334---338, 2004.

\bibitem{treleaven2013asymptotically}
K.~Treleaven, M.~Pavone, and E.~Frazzoli.
\newblock Asymptotically optimal algorithms for one-to-one pickup and delivery
  problems with applications to transportation systems.
\newblock 2013.

\end{thebibliography}

\appendix 
\section{Proofs of Lemma 3 and Lemma 5}
\label{app}
The proofs of Lemmas \ref{lem:Ch1} and \ref{lem:Ch2} are adapted from \cite{Christofides_worst-case_1976}.\\

\begin{proof} [Proof of Lemma \ref{lem:Ch1}]
Suppose $H$ is a Hamiltonian tour which is constructed by subtracting an arbitrary edge of $T^*$. Note that $H$ is a spanning tree of $G$. Hence:
\begin{equation}
\label{eq:a15}
W(MST) \leq W(H) < W(T^*)
\end{equation}
\end{proof}

\begin{proof} [Proof of Lemma \ref{lem:Ch2}]
For an n-city TSP, consider $T^* = (x_{i_1},x_{i_2}, ...,x_{i_n})$. Starting from vertex $x_{i_1}$ and travelling round the circuit $T^*$, allocate the links traversed in an alternating manner to two sets $M_1$ and $M_2$. Starting with $M_1$, for example:  $M_1= \{(x_{i_1} , x_{i_2}), (x_{i_3} , x_{i_4}),...,( x_{i_{n-1}} , x_{i_n})\}$ and $M_2= \{(x_{i_2} , x_{i_3 }), (x_{i_4} , x_{i_5}),...,( x_{i_n} , x_{i_1})\}$. $M_1$ and $M_2$ are matching of $G$ and $W(M_1) + W(M_2) = W(T^*)$. Since $M_1$ and $M_2$ are defined arbitrarily we can assume $W(M_1) \leq W(M_2)$ without loss of generality, and so we have: $W(PM) \leq W(M_1) \leq \frac{1}{2} W(T^*)$. Hence, the lemma is proved. 
\end{proof}

\end{document}